\newtheorem{axiom}{Axiom}
\newtheorem{lemma}{Lemma}
\newtheorem*{assumption}{Assumption}
\newtheorem{theorem}{Theorem}
\newtheorem*{corollary}{Corollary}
\newcommand{\inlinecite}[1]{\citeauthor{#1} \shortcite{#1}}
\newcommand{\Epsilon}{\scalebox{1.6}{$\mathbf{\epsilon}$}}
\title{Rethinking the Discount Factor in Reinforcement Learning:\\A Decision Theoretic Approach}
\author{Silviu Pitis\\University of Toronto, Vector Institute\\Toronto, ON\\spitis@cs.toronto.edu}
\begin{document}
\maketitle

\begin{abstract}
Reinforcement learning (RL) agents have traditionally been tasked with maximizing the value function of a Markov decision process (MDP), either in continuous settings, with fixed discount factor $\gamma < 1$, or in episodic settings, with $\gamma = 1$. While this has proven effective for specific tasks with well-defined objectives (e.g., games), it has never been established that fixed discounting is suitable for general purpose use (e.g., as a model of human preferences). This paper characterizes rationality in sequential decision making using a set of seven axioms and arrives at a form of discounting that generalizes traditional fixed discounting. In particular, our framework admits a state-action dependent ``discount'' factor that is not constrained to be less than 1, so long as there is eventual long run discounting. Although this broadens the range of possible preference structures in continuous settings, we show that there exists a unique ``optimizing MDP'' with fixed $\gamma < 1$ whose optimal value function matches the true utility of the optimal policy, and we quantify the difference between value and utility for suboptimal policies. Our work can be seen as providing a normative justification for (a slight generalization of) Martha White's RL task formalism (2017) and other recent departures from the traditional RL, and is relevant to task specification in RL, inverse RL and preference-based RL.
\end{abstract}

\section{Introduction} \label{section_introduction}

The field of reinforcement learning (RL) studies agents that learn to interact ``optimally'' with a partially known environment \cite{sutton1998reinforcement}. The common environmental model, underlying almost all analysis in RL, is the Markov decision process (MDP), with optimal behavior defined in terms of the expected sum of future rewards, discounted at a constant rate per time step. The motivation for this approach has historically been a practical one; e.g., to enable us to make ``precise theoretical statements'' \cite{sutton1998reinforcement} or because rewards provide ``the most succinct, robust, and transferable definition of a task'' \cite{abbeel2004apprenticeship}. While this has proven effective for tasks with well-defined objectives (e.g., games), modern RL aims to tackle increasingly general tasks, and it has never been established that the MDP is suitable for general purpose use. Recently, researchers have adopted more flexible value functions with state, state-action, and transition dependent discount factors \cite{sutton2011horde,silver2017ThePE,white2017UnifyingTS}. Practical considerations aside, which approach, if any, is most sensible from a \textit{normative} perspective?

We address this question by presenting an axiomatic framework to characterize rationality in sequential decision making. After a motivating example in Section \ref{section_motivation} and a discussion of related work in Section \ref{section_related}, including an overview of the relevant work in normative decision theory, we proceed in Section \ref{section_theory} to develop our framework. From a set of seven axioms we derive a form of utility function that admits a state-action dependent discount factor that can be greater than 1, so long as there is long run discounting. Although this broadens the range of possible preference structures (vis-\`a-vis the fixed discount setting), we show that there exists a unique ``optimizing MDP'' with fixed $\gamma < 1$ whose optimal value function matches the utility of the optimal policy, and we quantify the difference between value and utility for suboptimal policies. Section \ref{section_GRL} discusses the implications of our work for task specification in RL, inverse RL and preference-based RL, and highlights an interesting connection to Dayan's successor representation \shortcite{dayan1993improving}.

\section{Motivation} \label{section_motivation}

\begin{figure*}[ht]
\includegraphics[width=\textwidth]{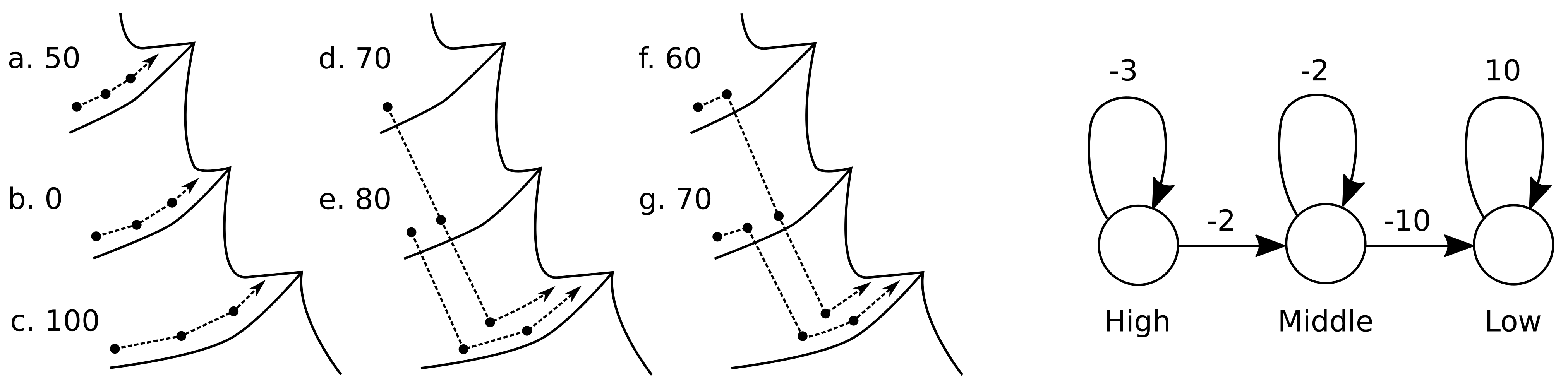}
\caption{The utilities in the Cliff Example (left) are not compatible with any 3-state MDP; e.g. the only MDP that satisfies the Bellman equation with respect to the optimal policy when $\gamma = 0.9$ (right).}
\label{fig_cliffMDP}
\end{figure*}

To motivate our work, we show how the traditional fixed $\gamma$ value function fails to model a set of preferences. Consider an agent that is to walk in a single direction on the side of a cliff forever (the ``Cliff Example''). Along the cliff are three parallel paths---low, middle and high---to which we assign utilities of 100, 0 and 50, respectively (Figure \ref{fig_cliffMDP} (left) a-c). Let us also suppose the agent has the option to jump down one level at a time from a higher path, but is unable to climb back up. Thus the agent has many options. Four of them are shown in Figure \ref{fig_cliffMDP} (left) d-g with their assigned utilities. 

At a glance, there does not appear to be anything irrational about the utility assignments. But it is impossible to represent this utility structure using a 3-state fixed $\gamma$ MDP with infinite time horizon. To see this, consider that for the Bellman equation to be satisfied with respect to the optimal policy, paths c-g in Figure \ref{fig_cliffMDP} (left) imply the reward values shown in Figure \ref{fig_cliffMDP} (right) when one assumes $\gamma = 0.9$. This implies that the utilities of paths a and b are -30 and -20, respectively. Not only is this incorrect, but the order is reversed! This holds for all $\gamma \in [0, 1)$ (see Supplement for a short proof). It follows that either the utility assignments are irrational, or the MDP structure used is inadequate. This inconsistency motivates a closer look: what characteristics, or axioms, should we demand of ``rational'' utility assignments? Are they satisfied here, and if so, what does this mean for our default modeling approach (the fixed $\gamma$ MDP)?

\section{Related work} \label{section_related}

\inlinecite{koopmans1960stationary} provided the first axiomatic development of discounted additive utility over time. This and several follow-up works are summarized and expanded upon by \inlinecite{koopmans1972representations} and \inlinecite{meyer1976preferences}. The applicability of these and other existing discounted additive utility frameworks to general purpose RL is limited in several respects. For instance, as remarked by \inlinecite{sobel2013discounting}, most axiomatic justifications for discounting have assumed deterministic outcomes, and only a handful of analyses address stochasticity  \cite{meyer1976preferences,epstein1983stationary,sobel2013discounting}. Naively packaging deterministic outcome streams into arbitrary lotteries (as suggested by \inlinecite{meyer1976preferences}, \S9.3) is difficult to interpret in case of control (see our commentary in Subsection \ref{subsection_prefs_over_prospects} on the resolution of intra-trajectory uncertainty) and entirely hypothetical since the agent never makes such choices---compare our (slightly) less hypothetical ``original position'' approach in Subsection \ref{subsection_cardinal}.

Existing frameworks have also typically been formulated with a focus on future streams of \textit{consumption} or \textit{income}, which has led to assumptions that do not necessarily apply to sequential decision making. When each unit in a stream is assumed to be scalar, this already rules out ``certain types of intertemporal complementarity'' \cite{diamond1965evaluation}. This complementarity is also ruled out by mutual independence assumptions \cite{koopmans1972representations} in frameworks admitting vector-valued units (see commentary of \inlinecite{meyer1976preferences}, \S 9.4, and \inlinecite{frederick2002time}, \S 3.3). Even seemingly innocuous assumptions like bounded utility over arbitrary outcome streams have important consequences. For instance, \inlinecite{meyer1976preferences} and \inlinecite{epstein1983stationary} derive similar utility representations as our Theorem \ref{theorem_functional_form}, but use the aforementioned assumption to conclude that the discount factor is always less than or equal to 1 (their argument is discussed in Subsection \ref{subsection_gamma}). 
By contrast, our framework admits a state-action dependent ``discount'' factor that may be greater than 1, \textit{so long as there is eventual long run discounting}---thus, specific, measure zero trajectories may have unbounded reward but the utility of a stochastic process that may produce such trajectories will still exist. This is illustrated in the Supplement. 

\inlinecite{frederick2002time} provide a comprehensive empirical review of the discounted additive utility model as it pertains to human behavior and conclude that it ``has little empirical support.'' While this is consistent with our normative position, it does not invalidate discounting, as humans are known to exhibit regular violations of rationality \cite{tversky1986rational}. Such violations are not surprising, but rather a necessary result of bounded rationality \cite{simon1972theories}. Our work is in a similar vein to \inlinecite{russell2014update} in that it argues that this boundedness necessitates new research directions.

Several papers in economics---including \inlinecite{kreps1977decision} (and sequels), \inlinecite{jaquette1976utility} (and prequels), \inlinecite{porteus1975optimality} (and sequel)---examine sequential decision processes that do not use an additive utility model. Of particular relevance to our work are \inlinecite{von1953theory}, \inlinecite{kreps1978temporal} and \inlinecite{sobel1975ordinal}, on which our axiomatic framework is based. To the authors' knowledge, no study, including this one, has ever provided a direct axiomatic justification of the MDP as a model for general rationality. This is not so surprising given that the MDP has historically been applied as a task-specific model. For example, the MDP in Bellman's classic study \shortcite{bellman1957markovian} arose ``in connection with an equipment replacement problem''.

The MDP is the predominant model used in RL research. It is commonly assumed that complex preference structures, including arbitrary human preferences, can be well represented by traditional MDP value functions \cite{abbeel2004apprenticeship,christiano2017deep}. To the authors' knowledge, the best (indeed, \textit{only}) theoretical justification for this assumption is due to \inlinecite{ng2000algorithms}, discussed in Subsection \ref{subsection_optimizing}. Some RL researchers have proposed generalizations of the traditional value functions that include a transition dependent discount factor \cite{white2017UnifyingTS,silver2017ThePE}. Our work can be understood as a normative justification for (a slight generalization of) these approaches. This, and further connections to the RL literature are discussed in Section \ref{section_GRL}. 

\section{Theory}\label{section_theory}

\subsection{Preliminaries}

Our basic environment is a sequential decision process (SDP) with infinite time horizon, formally defined as the tuple $(S, A, T, T_0)$ where $S$ is the state space, $A$ is the action space, $T: S \times A \to \mathscr{L}(S)$ is the transition function mapping state-action pairs to \textit{lotteries} (i.e., probability distributions with finite support) over next states, and $T_0 \in \mathscr{L}(S)$ is the distribution from which initial state $s_0$ is chosen. To accommodate episodic (terminating) processes, one may simply introduce a terminal state that repeats ad infinitum. 

A \textit{trajectory} is an infinite sequence of states and actions, $(s_t, a_t, s_{t+1}, a_{t+1}, \dots)$. For each non-negative integer $t$, $y_t \in Y_t$ denotes the \textit{history} from time $t = 0$ through the state at time $t$; e.g., $(s_0, a_0, s_1, a_1, s_2) \in Y_2$. $y_{t[i]}$ indexes the $i$-th state in $y_t$; e.g., $(s_0, a_0, s_1, a_1, s_2)_{[2]} = s_2$.

A (stochastic) \textit{stationary policy} $\pi: S \to \mathscr{L}(A)$ (or $\omega$ when a second generic is needed) maps states to lotteries over actions. A \textit{non-stationary policy} from time $t$, $\Pi_t = (\pi_t, \pi_{t+1}\ |\ y_{t+1}, \pi_{t+2}\ |\ y_{t+2}, \dots)$ (or just $\Pi$, or $\Omega$ when a second generic is needed) is a conditional sequence of stationary policies where the choice of $\pi_t$ may depend on $y_t$. $\Pi_{[i]}$ indexes $\Pi$'s $i$-th element (e.g., $\Pi_{t[1]} = \pi_{t+1}\ |\ y_{t+1}$), $\Pi_{[i:]}$ denotes $(\Pi_{[i]}$, $\Pi_{[i+1]}, \dots)$. $\Pi(s)$ is shorthand for $\Pi_{[0]}(s)$. Note that stationary policy $\pi$ may be viewed as the non-stationary policy $(\pi, \pi, \dots)$. The space of all non-stationary policies is denoted $\mathbf{\Pi}$. 

A Markov decision process (MDP) is an SDP together with a tuple $(R, \gamma)$, where $R: S \times A \to \mathbb{R}$ returns a bounded scalar reward for each transition and $\gamma \in [0, 1)$ is a discount factor. For a given MDP, we define the value function for a policy $\Pi$, $V^\Pi: S \to \mathbb{R}$, as $V^\Pi(s_t) = \mathbb{E}[\sum_{t=0}^\infty \gamma^t R(s_t, a_t)]$. Further, we define the Q-function for $\Pi$, $Q^\Pi: S \times A \to \mathbb{R}$ as $Q(s,a) = V^{a\Pi}(s)$, where $a\Pi = (\pi, \Pi_{[0]}, \Pi_{[1]}, \dots$) is the non-stationary policy that uses generic policy $\pi$ with $\pi(s) = a$ in the first step and follows policy $\Pi$ thereafter.

\subsection{Preferences over prospects}\label{subsection_prefs_over_prospects}

As a starting point in our characterization of rationality, we would like to apply the machinery of expected utility theory \cite{von1953theory} to preferences over the set of possible futures, or ``prospects'', $\mathcal{P}$, and more generally, lotteries on prospects, $\mathscr{L}(\mathcal{P})$. Some care is required in defining a prospect so as to satisfy the necessary axioms. In particular, we would like (strict) preference to be \textit{asymmetric}, meaning that between any two prospects $p, q \in \mathcal{P}$, at most one of $p \succ q$ or $q \succ p$ holds, where $\succ$ denotes strict preference (for convenience, we also define weak preference $p \succeq q$ as \smash{\underline{not}} $q \succ p$, and indifference $p \sim q$ as $p \succeq q$ and $q \succeq p$). Without additional assumptions, preferences over trajectories or policies fail to satisfy asymmetry. 

Suppose that preferences were defined over bare trajectories, as in ``preference-based RL'' \cite{wirth2017survey}. One problem with this is that intra-trajectory uncertainty has already been resolved; e.g., to evaluate a trajectory that risked, but did not realize, a visit to a difficult region of the state space, one must make an assumption about how the agent would have behaved in that region. More generally, it is unclear whether trajectories should be compared with respect to action quality (what could have happened) or with respect to outcomes (what did happen). 

Suppose instead that preferences were defined over bare policies. This is still problematic because preference would depend on the current state distribution (not just $T_0$). To see this take an SDP with disconnected state trees $S_1$ and $S_2$, where $\pi_1 \succ \pi_2$ in $S_1$ but $\pi_2 \succ \pi_1$  in $S_2$.

To avoid such difficulties, we define a prospect as a pair $(s, \Pi)$, where $s \in S$ and $\Pi$ is an arbitrary non-stationary policy, which represents the stochastic process that results when the agent starts in state $s$ and behaves according to $\Pi$ thereafter. For this definition to work we need the following assumption, in absence of which different histories leading up to the initial state $s$ could result in preference reversal:

\begin{assumption}[Markov preference, MP]
Preferences over prospects are independent of time $t$ and history $y_t$.
\end{assumption}

One might justify MP in several ways. First, \textit{trivially}, one may restrict the scope of inquiry to time $t = 0$. Second, \textit{theoretically}, it is a consequence of certain preference structures (e.g., the structure associated with the standard optimality criteria for MDPs). Third, \textit{practically}, one can view MP as a constraint on agent design. Typical RL agents, like the DQN agent of \inlinecite{mnih-atari-2013} are restricted in this way. Finally, \textit{constructively}, MP may be achieved by using an ``internal'' SDP that is derived from the environment SDP, as shown in Subsection \ref{subsection_constructiveMP}. The DRQN agent of \inlinecite{hausknecht2015drqn}, which augments the DQN agent with a recurrent connection, implements such a construction. 

As compared to trajectories, all uncertainty in prospects is left unresolved, which makes them comparable to the ``temporal lotteries'' of \inlinecite{kreps1978temporal}. As a result, it is admittedly difficult to express \textit{empirical} preference over prospects. Indeed, within the bounds of an SDP, an agent only ever chooses between prospects originating in the same state. In Subsection \ref{subsection_cardinal} we will apply a ``veil of ignorance'' argument \cite{harsanyi1953cardinal,rawls2009theory} to enable a general comparison between prospects for normative purposes. 

\subsection{Constructive Markov preference (MP)}\label{subsection_constructiveMP}

In this section we derive an ``internal'' SDP from an environment SDP, $(S, A, T, T_0)$, so that Markov preference is satisfied with respect to the internal SDP. First, define a \textit{historical prospect} to be a pair $(y_t, \Pi_t)$ where $y_t \in \cup_n Y_n$ and $\Pi_t$ is the policy to be followed when starting in the final state of $y_t$. One should have little trouble accepting asymmetry with respect to preferences over historical prospects.

Next, define an equivalence relation on the set of all histories as follows: $y_i, y_j \in \cup_n Y_n$ are equivalent if the last states are equal, $y_{i[i]} = y_{j[j]}$, and, for all $\Pi^1$ and $\Pi^2$, $(y_i, \Pi_i^1) \succ (y_i, \Pi_i^2) \iff (y_j, \Pi_j^1) \succ (y_j, \Pi_j^2)$. Let $S'$ be the set of equivalence classes with generic element $s' = \{y_t\}$. Note that $S'$ may be uncountable even if $S$ is finite.

It follows from our construction that preferences over the prospects $(s', \Pi)$, where $s' \in S'$ and $\Pi$ is an arbitrary non-stationary policy, are independent of time $t$ and history $y_t$. Therefore, the constructed SDP, $(S', A, T', T_0)$, where $T'(s',a) \vcentcolon= T(y_{t[t]},a)$, satisfies Markov preference. 

\subsection{Cardinal utility over prospects} \label{subsection_cardinal}

We imagine a hypothetical state from which an agent chooses between lotteries (i.e., probability distributions with finite support) of prospects, denoted by $\mathscr{L}(\mathcal{P})$. We might think of this choice being made from behind a ``veil of ignorance'' \cite{rawls2009theory}. In our case, the veil is not entirely hypothetical: the agent's designer, from whom preferences are derived, is faced with a similar choice problem. The designer can instantiate the agent's internal state, which encapsulates the agent's subjective belief about the history, to be anything (albeit, the external state is outside the designer's control). Now there may be some uncertainty about which internal hardware states correspond to which histories, so that the designer is, in a sense, behind a veil of ignorance. 

We assume that strict preference ($\succ$) with respect to arbitrary prospect lotteries $\tilde{p}, \tilde{q}, \tilde{r} \in \mathscr{L}(\mathcal{P})$ satisfies: 

\begin{axiom}[Asymmetry]
	If $\tilde{p} \succ \tilde{q}$, then not $\tilde{q} \succ \tilde{p}$. 
\end{axiom}

\begin{axiom}[Negative transitivity]
	If not $\tilde{p} \succ \tilde{q}$, and not $\tilde{q} \succ \tilde{r}$, then not $\tilde{p} \succ \tilde{r}$.
\end{axiom}

\begin{axiom}[Independence]\label{axiom_independence}
	If $\alpha \in (0, 1]$ and $\tilde{p} \succ \tilde{q}$, then $\alpha\tilde{p} + (1-\alpha)\tilde{r} \succ \alpha\tilde{q} + (1-\alpha)\tilde{r}$.
\end{axiom}

\begin{axiom}[Continuity]\label{axiom_continuity}
	If $\tilde{p} \succ \tilde{q} \succ \tilde{r}$, then $\exists\ \alpha, \beta \in (0, 1)$ such that $\alpha\tilde{p} + (1-\alpha)\tilde{r} \succ \tilde{q} \succ \beta\tilde{p} + (1-\beta)\tilde{r}$.
\end{axiom}

The notation $\alpha \tilde{p} + (1-\alpha) \tilde{q}$ above represents a \textit{mixture} of prospect lotteries, which is itself a prospect lottery with a $\alpha\%$ chance of lottery $\tilde{p}$ and a $(1 - \alpha)\%$ chance of lottery $\tilde{q}$. Note that prospects are (degenerate) prospect lotteries.

We now apply the VNM Expected Utility Theorem \cite{von1953theory} to obtain a cardinal utility function over prospects. We use a version of the theorem from \inlinecite{kreps1988notes} (Theorem 5.15), restated here without proof and with minor contextual modifications:

\begin{theorem}[Expected utility theorem]\label{theorem_VNM} The binary relation $\succ$ defined on the set $\mathscr{L}(\mathcal{P})$ satisfies Axioms 1-4 if and only if there exists a function $U: P \to \mathbb{R}$ such that, $\forall\ \tilde{p}, \tilde{q} \in \mathscr{L}(\mathcal{P})$:
$$\tilde{p} \succ \tilde{q} \iff \sum_{z} \tilde{p}(z)U(z) > \sum_{z} \tilde{q}(z)U(z)$$
where the two sums in the display are over all $z \in \mathcal{P}$ in the respective supports of $\tilde{p}$ and $\tilde{q}$. Moreover, another function $U'$ gives this representation if and only if $U'$ is a positive affine transformation of $U$. 
\end{theorem}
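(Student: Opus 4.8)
The plan is to prove the two directions separately and then handle uniqueness. The easy direction ($\Leftarrow$) is to assume such a $U$ exists and verify Axioms 1--4 directly. Writing $E_{\tilde p}[U] = \sum_z \tilde p(z) U(z)$, asymmetry and negative transitivity are inherited from the strict order $>$ on $\mathbb{R}$; independence follows because $E_{\alpha \tilde p + (1-\alpha)\tilde r}[U] = \alpha E_{\tilde p}[U] + (1-\alpha) E_{\tilde r}[U]$ is affine in the mixing weight, so a strict inequality between $E_{\tilde p}[U]$ and $E_{\tilde q}[U]$ is preserved under a common mixture; and continuity follows because $\alpha \mapsto \alpha E_{\tilde p}[U] + (1-\alpha) E_{\tilde r}[U]$ is a continuous real function, so the required $\alpha, \beta$ exist by the intermediate value theorem.

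For the hard direction ($\Rightarrow$), I would build $U$ through a calibration argument. First I would record that Asymmetry and Negative transitivity make $\succeq$ a complete, transitive weak order, so that $\sim$ is an equivalence relation and strict and weak preference behave as expected. Next I would prove two mixture lemmas from Independence and Continuity: a \emph{monotonicity} lemma, that if $\tilde p \succ \tilde q$ and $0 \le \beta < \alpha \le 1$ then $\alpha \tilde p + (1-\alpha)\tilde q \succ \beta \tilde p + (1-\beta)\tilde q$; and a \emph{calibration} lemma, that whenever $\tilde p \succ \tilde r \succ \tilde q$ there is a \emph{unique} weight $\lambda \in (0,1)$ with $\tilde r \sim \lambda \tilde p + (1-\lambda)\tilde q$, where uniqueness comes from monotonicity and existence from Continuity.

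With these in hand I would fix two reference prospects and define $U$ by the calibrating weight $\lambda$, then verify the representation by reducing an arbitrary finite-support lottery to an equivalent mixture of the references through repeated application of Independence, inducting on the size of the support; monotonicity guarantees that the induced ordering of calibrating weights agrees with $\succ$. Finally, for uniqueness I would show that any two representing functions must assign the same calibrating weights, which forces $U' = aU + b$ with $a > 0$: the scale $a$ and shift $b$ are pinned down by the values on the two references, and Independence propagates the affine relation to all prospects.

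The main obstacle is the construction of a single $U$ valid across \emph{all} of $\mathcal{P}$ when $\mathcal{P}$ has no global best or worst prospect, which is possible here since $\mathcal{P}$ may be uncountable. Because lotteries have only finite support, any fixed pair of them can be analyzed within a finite sub-collection of outcomes possessing a local best and worst, but I must show the calibration is \emph{consistent} across different sub-collections so that the resulting scale is global; the work lies in checking, using the monotonicity and calibration lemmas together with transitivity, that extending or changing the reference frame leaves $U$ unchanged up to the allowed affine freedom. The Continuity (Archimedean) axiom is exactly what guarantees that the calibrating weight is a genuine real number and rules out lexicographic preferences, which would otherwise admit no expected-utility representation.
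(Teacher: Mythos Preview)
Your proposal is a correct sketch of the standard von Neumann--Morgenstern argument, but note that the paper does not actually prove this theorem: it is explicitly restated \emph{without proof} from Kreps (1988), Theorem~5.15, and used as an imported result. So there is no ``paper's own proof'' to compare against here.

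That said, as a stand-alone argument your outline is sound and follows the classical route (see, e.g., Kreps's own development): the easy direction is immediate from linearity of expectation and properties of $>$ on $\mathbb{R}$; the hard direction proceeds by establishing that $\succeq$ is a weak order, proving the monotonicity-in-mixtures and unique-calibration lemmas from Independence and Continuity, defining $U$ as the calibrating weight relative to two reference lotteries, and then extending consistently across reference frames. You correctly flag the only real technical wrinkle, namely that $\mathcal{P}$ need not have a global best and worst element, so one must patch together local calibrations and verify they agree up to a common positive affine transformation; the finite-support restriction on $\mathscr{L}(\mathcal{P})$ is what makes this manageable. The uniqueness argument via matching values on two references is also standard and correct.
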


Applying the theorem produces the cardinal utility function $U : \mathcal{P} \to \mathbb{R}$, as desired. We overload notation and define $U: \mathscr{L}(\mathcal{P}) \to \mathbb{R}$ as $U(\tilde{p}) = \sum_{z} \tilde{p}(z)U(z)$. 

We further define $U^\Pi : S \to \mathbb{R}$ as $U^\Pi(s) = U((s, \Pi)) = U(s, \Pi)$ for policy $\Pi$. Similarly, we overload $U^\Pi$ to define $U^\Pi : S \times A \to \mathbb{R}$ as $U^\Pi(s, a) = U(s, a\Pi)$.

Finally, given a lottery over states, $\tilde{s} \in \mathscr{L}(S)$, we denote the prospect lottery given by $\tilde{s}$ with fixed $\Pi$ as $(\tilde{s}, \Pi)$, and define preference ordering $\succ_{\tilde{s}}$ over policies induced by $\tilde{s}$  according to the rule $\Pi \succ_{\tilde{s}} \Omega$ if and only if $U(\tilde{s}, \Pi) = \sum_z\tilde{s}(z)U^\Pi(z) > \sum_z\tilde{s}(z)U^\Omega(z) = U(\tilde{s}, \Omega)$. We further define the shorthand $U^{\tilde{s}}: \mathbf{\Pi} \to \mathbb{R}$ as $U^{\tilde{s}}(\Pi) = U(\tilde{s}, \Pi)$.

\subsection{Rational planning}

Axioms 1-4 are classics of decision theory and have been debated extensively over the years (see footnote 1 of \inlinecite{machina1989dynamic} for some initial references). Other than asymmetry, which is natural given MP, we do not wish to argue for their merits. Rather, it is prudent to strengthen this foundation for the sequential context. Without strong normative assumptions about the structure of preferences, such as those implied by the standard optimality criteria of MDPs, one could infer little about future behavior from past experiences and learning would be impossible; see, e.g., the ``No Free Lunch'' theorem for inverse reinforcement learning \cite{armstrong2017impossibility}. The axioms and results in this subsection provide what we argue is a minimal characterization of rational planning. We begin with:

\begin{axiom}[Irrelevance of unrealizable actions] \label{axiom_irrelevance_unrealizable}
If the stochastic processes generated by following policies $\Pi$ and $\Omega$ from initial state $s$ are identical, then the agent is indifferent between prospects $(s, \Pi)$ and $(s, \Omega)$. 
\end{axiom}

A consequence of this axiom is that $U(s, a\Pi)$ is well-defined, since $U(s, a\Pi)$ is constant for all first-step policies $\pi$ with $\pi(s) = a$.

Assuming non-trivial MP, an agent will choose between prospects of the form $(s_2, \Pi_2)$ at time $t=2$, where $s_2 \sim T(s_1, a_1)$ and $\Pi_2$ is any non-stationary policy. The agent has preferences over \textit{plans} for this choice at $t=1$, which can be ascertained by restricting the $t=1$ choice set to the set $X$ of prospects of the form $(s_1, a_1\Pi)$. From a rational agent, we should demand that the restriction of $U$ to $X$, $\left.U\right|_X: \mathbf{\Pi} \to \mathbb{R}$, represent the same preference ordering over policies as $U^{T(s_1, a_1)}$. We thus assume:

\begin{axiom}[Dynamic consistency]\label{axiom_consistency}
$(s, a\Pi) \succ (s, a\Omega)$ if and only if $(T(s, a), \Pi) \succ (T(s, a), \Omega)$. 
\end{axiom}

Dynamic consistency is based on the similar axioms of \inlinecite{sobel1975ordinal} and \inlinecite{kreps1978temporal}, reflects the general notion of dynamic consistency discussed by \inlinecite{machina1989dynamic}, and might be compared to Koopmans' classic ``stationarity'' axiom \shortcite{koopmans1960stationary}. Note that we demand consistency only before and after an action has been chosen, but not before and after environmental uncertainty is resolved. That is, $(T(s, a), \Pi) \succ (T(s, a), \Omega)$ \textit{does not imply} that for all $z$ in the support of $T(s, a)$, $(z, \Pi) \succ (z, \Omega)$.

Finally, we adopt a mild version of ``impatience'', comparable to Sobel's countable transitivity axiom \shortcite{sobel1975ordinal}, but stated here in terms of utility (for clarity). Impatience can be understood as the desire to make the finite-term outcomes meaningful in light of an infinite time horizon \cite{koopmans1960stationary}. In the statement below, $\Pi_n\Omega$ is the policy that follows $\Pi$ for the first $n$ steps and $\Omega$ thereafter. 

\begin{axiom}[Horizon continuity]\label{axiom_countable_transitivity} The sequence $\{U(s, \Pi_n\Omega)\}$ converges with limit $U(s, \Pi)$.  
\end{axiom}

One might use this basic setup to prove a number of facts about rational behavior in SDPs; e.g., \inlinecite{sobel1975ordinal} uses a similar axiomatic structure to prove a policy improvement theorem alongside the next result. We restrict our analysis to three immediately relevant results (but see our comment on convergence theorems in Subsection \ref{subsection_GRL}). The first justifies our later focus on stationary policies. An \textit{optimal policy} is a policy $\Pi$ for which $(s, \Pi) \succeq (s, \Omega)$ for all $s$ and $\Omega$.

\begin{lemma} \label{lemma_optdelay}
If $\Pi$ is an optimal policy, so too is the policy $\Pi_1\Pi = (\Pi_{[0]}, \Pi_{[0]}, \Pi_{[1]}, \Pi_{[2]}, \dots)$ formed by delaying $\Pi$ one step in order to act according to $\Pi_{[0]}$ for that step. 
\end{lemma}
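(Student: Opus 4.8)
The plan is to show that delaying $\Pi$ by one step cannot lower its utility from any state, and then invoke optimality of $\Pi$ itself. By the VNM representation (Theorem \ref{theorem_VNM}), the statement ``$\Pi$ is optimal'' is equivalent to $U^\Pi(s) \ge U^\Omega(s)$ for all $s$ and all $\Omega$. Hence it suffices to prove that $U^{\Pi_1\Pi}(s) \ge U^\Pi(s)$ for every $s$: combining this with optimality of $\Pi$ gives $U^{\Pi_1\Pi}(s) \ge U^\Pi(s) \ge U^\Omega(s)$ for all $s,\Omega$, which is exactly the definition of $\Pi_1\Pi$ being optimal. So the whole lemma reduces to a single inequality against $\Pi$.

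To prove that inequality, write $\pi_0 = \Pi_{[0]}$ for the shared first-step policy. Both $\Pi$ and $\Pi_1\Pi$ act by $\pi_0$ in the first step and differ only in the continuation: $\Pi$ continues with $\Pi_{[1:]}$, whereas $\Pi_1\Pi$ continues with the full $\Pi$. Since the first action is drawn from $\pi_0(s)$, I would decompose each prospect as a lottery over the fixed-first-action prospects, using Irrelevance of unrealizable actions (Axiom \ref{axiom_irrelevance_unrealizable}) to make $U(s,a\Pi)$ well-defined and the linearity of $U$ from Theorem \ref{theorem_VNM} to obtain
$$U^{\Pi_1\Pi}(s) = \sum_a \pi_0(s)(a)\, U(s, a\Pi), \qquad U^{\Pi}(s) = \sum_a \pi_0(s)(a)\, U(s, a\Pi_{[1:]}).$$
Thus it suffices to show the termwise comparison $U(s, a\Pi) \ge U(s, a\Pi_{[1:]})$ for each $a$ in the support of $\pi_0(s)$.

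For fixed $a$, I would use optimality of $\Pi$ at the \emph{next} states: $(z, \Pi) \succeq (z, \Pi_{[1:]})$, i.e.\ $U^\Pi(z) \ge U^{\Pi_{[1:]}}(z)$, for every state $z$. Averaging over $z \sim T(s,a)$ (again by linearity of $U$) yields $U(T(s,a), \Pi) \ge U(T(s,a), \Pi_{[1:]})$, that is $(T(s,a), \Pi) \succeq (T(s,a), \Pi_{[1:]})$. Dynamic consistency (Axiom \ref{axiom_consistency}), read contrapositively as a statement about $\succeq$, then transfers this back one step to $(s, a\Pi) \succeq (s, a\Pi_{[1:]})$, which is the desired $U(s, a\Pi) \ge U(s, a\Pi_{[1:]})$. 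Substituting into the decomposition above gives $U^{\Pi_1\Pi}(s) \ge U^\Pi(s)$, completing the argument.

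The routine step is the dynamic-consistency transfer; the two delicate points are the justification of the lottery decomposition of a prospect with a stochastic first step into fixed-first-action prospects (where both Irrelevance of unrealizable actions and VNM linearity are genuinely needed), and the careful passage from the strict-preference form of Dynamic consistency to the weak inequalities, together with pushing the pointwise optimality $(z,\Pi)\succeq(z,\Pi_{[1:]})$ through the expectation over $T(s,a)$. Notably, Horizon continuity (Axiom \ref{axiom_countable_transitivity}) is not required here, since every comparison involved happens at finite depth.
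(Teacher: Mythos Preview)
Your proof is correct and follows essentially the same route as the paper: use optimality of $\Pi$ at every successor state $z$ to get $(z,\Pi)\succeq(z,\Pi_{[1:]})$, average over the next-state distribution, and invoke dynamic consistency to pull the inequality back to $(s,\Pi_1\Pi)\succeq(s,\Pi)$. The paper compresses this into three lines by applying dynamic consistency directly to the mixture $T(s,\Pi(s))$, whereas you first decompose over the actions in the support of $\pi_0(s)$ and apply Axiom~\ref{axiom_consistency} action-by-action; your version is the more careful of the two, and your remarks on the strict-to-weak passage and the non-use of horizon continuity are accurate.
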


\begin{proof}
Consider any state $s$. For each state $z$ in the support of $\Pi(s)$, $(z, \Pi) \succeq (z, \Pi_{[1:]})$ (because $\Pi$ is optimal) so that $(T(s, \Pi(s)), \Pi) \succeq (T(s, \Pi(s)), \Pi_{[1:]})$. By dynamic consistency, this implies $(s, \Pi_1\Pi) \succeq (s , \Pi)$.  
\end{proof}

\begin{theorem} \label{theorem_optstat}
If there exists an optimal policy $\Pi$, there exists an optimal stationary policy $\pi$.
\end{theorem}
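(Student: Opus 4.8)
The plan is to show that the stationary policy $\pi \vcentcolon= \Pi_{[0]}$ --- the one that always applies the first-step policy of the given optimal $\Pi$ --- is itself optimal. The engine is Lemma~\ref{lemma_optdelay} (one-step delay preserves optimality), iterated and then pushed to a limit via Horizon continuity (Axiom~\ref{axiom_countable_transitivity}). First I would observe that $\pi \vcentcolon= \Pi_{[0]}$ is genuinely stationary, since the leading element of any non-stationary policy is an unconditioned map $S \to \mathscr{L}(A)$. For $n \geq 0$, let $\pi_n\Pi$ denote the policy that follows $\pi$ for the first $n$ steps and then $\Pi$, i.e. $(\underbrace{\pi, \dots, \pi}_{n}, \Pi_{[0]}, \Pi_{[1]}, \dots)$.

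Next I would prove by induction on $n$ that every $\pi_n\Pi$ is optimal. The base case $\pi_0\Pi = \Pi$ holds by hypothesis. For the inductive step, set $\Psi \vcentcolon= \pi_n\Pi$; since $\Psi_{[0]} = \pi$ in every case (including $n = 0$, where $\Pi_{[0]} = \pi$), Lemma~\ref{lemma_optdelay} yields that $\Psi_1\Psi = (\Psi_{[0]}, \Psi_{[0]}, \Psi_{[1]}, \Psi_{[2]}, \dots) = (\pi, \pi, \Psi_{[1]}, \Psi_{[2]}, \dots)$ is optimal, and a short unrolling identifies this policy exactly with $\pi_{n+1}\Pi$. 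The one bookkeeping hazard --- and the place I would be most careful --- is matching indices precisely so that inserting one more leading $\pi$ truly advances $\pi_n\Pi$ to $\pi_{n+1}\Pi$ rather than to some off-by-one variant; this is the main (if modest) obstacle, as the rest is largely mechanical.

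Finally I would combine optimality of all $\pi_n\Pi$ with uniqueness of the optimal utility. Any two optimal policies are mutually weakly preferred at every state, hence indifferent there, so by the representation in Theorem~\ref{theorem_VNM} they share a common value; thus $U(s, \pi_n\Pi) = U^\Pi(s)$ for all $n$ and all $s$. Horizon continuity, applied with leading policy $\pi$ and tail policy $\Pi$, states that $\{U(s, \pi_n\Pi)\}$ converges to $U(s, \pi)$. Since a constant sequence has its constant value as its limit, uniqueness of limits forces $U^\pi(s) = U^\Pi(s)$ for every $s$. Because $\Pi$ is optimal, $U^\pi(s) = U^\Pi(s) \geq U^\Omega(s)$ for all $s$ and $\Omega$, i.e. $(s, \pi) \succeq (s, \Omega)$ for all $s$ and $\Omega$, which establishes that the stationary policy $\pi$ is optimal and completes the argument.
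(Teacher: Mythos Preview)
Your proposal is correct and follows essentially the same route as the paper: set $\pi = \Pi_{[0]}$, iterate Lemma~\ref{lemma_optdelay} to conclude that each $\pi_n\Pi$ is optimal, and then invoke horizon continuity to pass to the limit. The paper's proof is terser (it records only $\pi_n\Pi \succeq \Pi$ and immediately appeals to Axiom~\ref{axiom_countable_transitivity}), whereas you additionally spell out the equality $U(s,\pi_n\Pi)=U^\Pi(s)$ and the uniqueness-of-limits step; this is extra detail rather than a different argument.
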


\begin{proof}[Proof]
Put $\pi = \Pi_{[0]}$. By repeated application of Lemma \ref{lemma_optdelay} we have $\pi_n\Pi \succeq \Pi$ for all $n > 0$. It follows from horizon continuity that $\pi \succeq \Pi$. 
\end{proof}

The next result is somewhat similar Lemma 4 of \inlinecite{kreps1978temporal}, but with a recursive, affine formulation. Note that the proof does not use horizon continuity. 

\begin{theorem}[Bellman relation for SDPs]\label{theorem_functional_form}
	There exist $\mathscr{R}: S \times A \to \mathbb{R}$ and $\Gamma: S \times A \to \mathbb{R}^+$ such that for all $s, a, \Pi$, $$U(s, a\Pi) = \mathscr{R}(s, a) + \Gamma(s, a)\mathbb{E}_{s' \sim T(s, a)}[U(s', \Pi)].$$
\end{theorem}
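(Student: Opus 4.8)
The plan is to fix a state--action pair $(s,a)$ and compare two real-valued functions of the continuation policy $\Pi$: the utility of the plan, $f(\Pi) = U(s, a\Pi)$, and the expected continuation utility, $g(\Pi) = U(T(s,a), \Pi) = \mathbb{E}_{s' \sim T(s,a)}[U(s', \Pi)]$. Dynamic consistency (Axiom \ref{axiom_consistency}) says precisely that $f$ and $g$ represent the same preference ordering over policies, so the theorem follows if I can promote this ordinal agreement to an affine identity $f = \Gamma(s,a)\, g + \mathscr{R}(s,a)$ with $\Gamma(s,a) > 0$. To do this I would invoke the uniqueness clause of the Expected Utility Theorem (Theorem \ref{theorem_VNM}), which forces two mixture-linear representations of the same (non-trivial) ordering to be positive affine transformations of one another.

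The key is to supply the mixture (lottery) structure on which VNM uniqueness operates. First I would observe that both $f$ and $g$ are mixture-linear: for a lottery over continuation policies---equivalently, a lottery over the prospects $\{(s,a\Pi)\}$, resp.\ $\{(T(s,a),\Pi)\}$, inside $\mathscr{L}(\mathcal{P})$---the overloaded definition $U(\tilde p) = \sum_z \tilde p(z)U(z)$ makes $U(s, a\,\cdot\,)$ and $U(T(s,a),\,\cdot\,)$ affine in the mixing weights. Irrelevance of unrealizable actions (Axiom \ref{axiom_irrelevance_unrealizable}) guarantees these are well defined on the induced lotteries: stripping the common first transition $(s,a)$ does not change the generated stochastic process, so distinct policy-lotteries inducing the same prospect-lottery receive the same value. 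Dynamic consistency, read as an equivalence of preference orders on the $t=1$ choice set $X$, then gives ordinal agreement of $f$ and $g$ across the whole convex set of (possibly randomized) continuation policies, not merely the pure ones.

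With both functions mixture-linear and order-equivalent on a common convex domain, VNM uniqueness yields constants $\Gamma(s,a) \ge 0$ and $\mathscr{R}(s,a)$ with $f = \Gamma(s,a)\, g + \mathscr{R}(s,a)$; since the two orderings agree rather than reverse, the slope is nonnegative, and it is strictly positive whenever the ordering is non-trivial. The remaining case---where all continuation policies are indifferent from $T(s,a)$---is handled directly: by dynamic consistency they are then also indifferent as plans, so $f$ and $g$ are both constant and any $\Gamma(s,a) > 0$ works with $\mathscr{R}(s,a)$ chosen to match. Setting $\Gamma = \Gamma(s,a)$ and $\mathscr{R} = \mathscr{R}(s,a)$ and letting $(s,a)$ range over $S \times A$ gives the claimed Bellman relation. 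Notably, this argument uses only Axioms 1--4 together with Axioms \ref{axiom_irrelevance_unrealizable} and \ref{axiom_consistency}, and not horizon continuity.

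I expect the main obstacle to be the second paragraph: rigorously justifying that dynamic consistency and the mixture structure extend from pure policies to lotteries over policies, so that VNM uniqueness is genuinely applicable. In particular, one must verify that the map $\Pi \mapsto (s,a\Pi)$ intertwines policy mixtures with prospect-lottery mixtures, and that Axiom \ref{axiom_consistency} is properly interpreted as an equivalence of preference orders on $X$ rather than a statement about finitely many pure policies. Establishing strict positivity of $\Gamma(s,a)$---as opposed to mere nonnegativity---is the other point requiring care, but the trivial-ordering dichotomy above resolves it.
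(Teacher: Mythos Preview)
Your proposal is correct and follows essentially the same route as the paper: fix $(s,a)$, use dynamic consistency to conclude that $U(s,a\Pi)$ and $U^{T(s,a)}(\Pi)$ represent the same preferences on the mixture-closed set $X=\{(s,a\Pi)\}$, and then invoke the uniqueness clause of Theorem~\ref{theorem_VNM} to obtain the positive affine relation. Your treatment is slightly more explicit than the paper's in two respects---you separately dispose of the degenerate case where all continuation policies are indifferent, and you flag the role of Axiom~\ref{axiom_irrelevance_unrealizable} in making $U(s,a\Pi)$ well defined---but the argument is the same, and the paper likewise notes that horizon continuity is not used.
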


\begin{proof}
	Fix $s$ and $a$. Dynamic consistency ensures that $U^{T(s, a)} = \mathbb{E}_{s' \sim T(s, a)}[U(s', \Pi)]$ represents the same preferences as the restriction of $U$ to the space $X$ of prospects of the form $(s, a\Pi)$. Preferences are cardinal because $\Pi$ may be stochastic, so that prospects in $X$ are prospect lotteries (i.e., $X \subset \mathscr{L}(\mathcal{P})$), $X$ is closed under mixtures (i.e., $\tilde{p},\tilde{q} \in X \implies \alpha\tilde{p} + (1-\alpha)\tilde{q} \in X$, $\forall \alpha \in [0, 1]$), and Axioms 1-4 apply to prospect lotteries in $X$. Therefore, by the restriction of Theorem 1 to $X$, $\left.U\right|_X$ and $U^{T(s, a)}$ are related by the positive affine transformation $\left.U\right|_X = \alpha + \beta U^{T(s, a)}$ for some $\alpha \in \mathbb{R}, \beta \in \mathbb{R}^+$. Define $\mathscr{R}(s, a) = \alpha$, $\Gamma(s, a) = \beta$. Since $s$ and $a$ were arbitrary, the result follows. 
\end{proof}

The final result of this subsection will be used to prove the value-utility relation of Subsection \ref{subsection_utility_value_bounds} and is useful for analyzing convergence of RL algorithms (see discussion in Subsection \ref{subsection_GRL}). It uses two new assumptions. First, we assume $\vert S \vert = n$ is finite. This allows us to define vectors $\textbf{u}^\Pi$ and $\textbf{r}^\pi$ so that their $i$th components equal $U(s_i, \Pi)$ and $\mathscr{R}(s_i, \pi(s_i))$, respectively. Further, we define diagonal matrix $\bm{\Gamma}^\pi$ whose $i$th diagonal entry is $\Gamma(s_i, \pi(s_i))$ and transition matrix $\textbf{T}^\pi$ whose $ij$th entry is $T(s_i, \pi(s_i))(s_j)$. Second, we assume the set $\{\textbf{u}^\Pi : \Pi \in \bm{\Pi}\}$ spans $\mathbb{R}^n$. This latter assumption is sensible given that $\vert \{\textbf{u}^\Pi\} \vert \gg n$ (one can also hypothetically alter $S$ or $A$ to make it true). We have:

\begin{theorem}[Generalized successor representation]\label{theorem_matrix_limit}
	If $\vert S \vert = n$ and $\text{span}(\{\normalfont{\textbf{u}}^\Pi\}) = \mathbb{R}^n$,  $\lim_{n\to\infty}(\bm{\Gamma}^\pi \mathbf{T}^\pi)^n = \mathbf{0}$, so that $(\mathbf{I} - \bm{\Gamma}^\pi\mathbf{T}^\pi)^{-1} = \mathbf{I}+(\bm{\Gamma}\mathbf{T})^1+(\bm{\Gamma}\mathbf{T})^2 + \dots$ is invertible.
\end{theorem}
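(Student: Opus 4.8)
The plan is to reduce the statement to a spectral-radius bound on the matrix $M := \bm{\Gamma}^\pi\mathbf{T}^\pi$: once I show $M^n \to \mathbf{0}$, the spectral radius of $M$ is strictly below $1$, so $\det(\mathbf{I}-M)\neq 0$ and the Neumann series $\sum_{k\ge 0}M^k$ converges to $(\mathbf{I}-M)^{-1}$, which is exactly the asserted generalized successor representation. Thus the entire task is to prove $\lim_{n\to\infty}M^n = \mathbf{0}$, and the two hypotheses (finiteness together with horizon continuity, and the spanning condition) are precisely what I would feed into this.

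First I would put Theorem \ref{theorem_functional_form} into vector form. For the one-step-delayed policy $\pi\Omega$ (apply $\pi$ once, then follow $\Omega$), the Bellman relation reads $\mathbf{u}^{\pi\Omega} = \mathbf{r}^\pi + M\,\mathbf{u}^{\Omega}$. Iterating this $n$ times and using the trivial identity $\pi_n\pi = \pi$ (so that $\mathbf{u}^{\pi_n\pi}=\mathbf{u}^\pi$) to eliminate the telescoping reward terms, I obtain the clean identity $\mathbf{u}^{\pi_n\Omega} = \mathbf{u}^\pi + M^{\,n}\bigl(\mathbf{u}^{\Omega}-\mathbf{u}^\pi\bigr)$ for every $\Omega$ and every $n$. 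Horizon continuity (Axiom \ref{axiom_countable_transitivity}) applied with the stationary policy $\pi$ gives $\mathbf{u}^{\pi_n\Omega}\to \mathbf{u}^\pi$, so the identity immediately yields $M^{\,n}(\mathbf{u}^{\Omega}-\mathbf{u}^\pi)\to \mathbf{0}$ for every policy $\Omega$. This is the whole content extracted from the axioms.

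The remaining step is to promote ``$M^n$ annihilates every difference $\mathbf{u}^\Omega-\mathbf{u}^\pi$'' to ``$M^n$ annihilates every vector,'' and this is where the spanning hypothesis enters and where I expect the only real difficulty. Picking a basis $\mathbf{u}^{\Omega_1},\dots,\mathbf{u}^{\Omega_n}$ out of the spanning set and writing $\mathbf{u}^\pi=\sum_i\alpha_i\mathbf{u}^{\Omega_i}$, then applying $M^n$ and substituting $M^n\mathbf{u}^{\Omega_i}=M^n\mathbf{u}^\pi+o(1)$, gives $(1-\sum_i\alpha_i)\,M^n\mathbf{u}^\pi\to\mathbf{0}$; hence $M^n\mathbf{u}^\pi\to\mathbf{0}$ as soon as $\sum_i\alpha_i\neq 1$, and then $M^n\mathbf{u}^{\Omega_i}\to\mathbf{0}$ for each basis vector, i.e.\ $M^n\to\mathbf{0}$. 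Equivalently and more transparently, since $M$ maps each difference to another difference, $M(\mathbf{u}^\Omega-\mathbf{u}^\pi)=\mathbf{u}^{\pi\Omega}-\mathbf{u}^\pi$, the subspace $V:=\operatorname{span}\{\mathbf{u}^\Omega-\mathbf{u}^\pi\}$ is $M$-invariant and $M|_V$ has spectral radius below $1$; if these differences already span $\mathbb{R}^n$ we are done.

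The subtlety I would flag as the crux is that the stated assumption, that the \emph{utility vectors} span $\mathbb{R}^n$, is a hair weaker than what the argument needs, namely that their \emph{differences} span. The two coincide unless every $\mathbf{u}^\Omega$ happens to lie on a common affine hyperplane off the origin (the degenerate case $\sum_i\alpha_i=1$ for all bases, equivalently $\dim V=n-1$), in which $M$ carries one extra transverse eigenvalue that horizon continuity leaves unconstrained. I would therefore either read the spanning hypothesis as applying to the differences, or observe that this degeneracy is non-generic given $\vert\{\mathbf{u}^\Omega\}\vert\gg n$ and the freedom to perturb $S$ or $A$; modulo this single point, the conclusion $M^n\to\mathbf{0}$, and with it the invertibility of $\mathbf{I}-M$ via the Neumann series, follows at once.
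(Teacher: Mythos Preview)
Your argument is essentially the paper's own: unroll the vector Bellman relation to obtain $\mathbf{u}^{\pi_n\Omega}-\mathbf{u}^\pi=(\bm{\Gamma}^\pi\mathbf{T}^\pi)^n(\mathbf{u}^\Omega-\mathbf{u}^\pi)$, invoke horizon continuity to kill the left side, and then use the spanning hypothesis to force $(\bm{\Gamma}^\pi\mathbf{T}^\pi)^n\to\mathbf{0}$. The paper derives the identity by writing out the partial geometric sums for $\mathbf{u}^{\pi_n\Omega}$ and $\mathbf{u}^\pi$ separately and subtracting, while you telescope via $\pi_n\pi=\pi$; these are the same computation.

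Where you go further than the paper is in isolating the step from ``$\{\mathbf{u}^\Omega\}$ spans $\mathbb{R}^n$'' to ``$\{\mathbf{u}^\Omega-\mathbf{u}^\pi\}$ spans $\mathbb{R}^n$'' as the crux. The paper asserts this implication in a single clause without comment, but as you observe, it fails exactly when all utility vectors lie on a common affine hyperplane off the origin, in which case the differences span only an $(n-1)$-dimensional $M$-invariant subspace and one transverse eigenvalue of $M$ is left unconstrained by horizon continuity. Your proposed fixes---read the hypothesis as applying to the differences, or appeal to genericity given $|\{\mathbf{u}^\Omega\}|\gg n$ and the freedom to perturb $S$ or $A$---are reasonable, and the paper's own parenthetical that ``one can also hypothetically alter $S$ or $A$ to make it true'' is in the same spirit; but strictly speaking neither the paper nor your proposal closes this affine-degeneracy case rigorously.
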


\begin{proof}
	Using $a\Pi = \pi_n\Omega$ in the vector form of Theorem \ref{theorem_functional_form}, and expanding the recursion $n-1$ steps gives:
	\small
	\begin{equation*}
	\begin{split} \mathbf{u}^{\pi_n\Omega} &= \mathbf{r}^\pi + \bm{\Gamma}^\pi\mathbf{T}^\pi\mathbf{u}^{\pi_{n-1}\Omega} \\
	&= \mathbf{r}^\pi(\mathbf{I}+(\bm{\Gamma}\mathbf{T})^1 + \dots + (\bm{\Gamma}\mathbf{T})^{n-1}) + (\bm{\Gamma}\mathbf{T})^n\mathbf{u}^\Omega
	\end{split}
	\end{equation*}
	\normalsize
	where the superscripts on $\bm{\Gamma}^\pi$ and $\mathbf{T}^\pi$ were dropped for convenience. Similarly, using $a\Pi = \pi$, we have:
	\small
	$$\mathbf{u}^{\pi} = \mathbf{r}^\pi(\mathbf{I}+(\bm{\Gamma}\mathbf{T})^1 + \dots + (\bm{\Gamma}\mathbf{T})^{n-1}) + (\bm{\Gamma}\mathbf{T})^n\mathbf{u}^\pi.$$
	\normalsize
	Subtracting the second from the first gives 
	$\mathbf{u}^{\pi_n\Omega} - \mathbf{u}^{\pi} = (\bm{\Gamma}^\pi\mathbf{T}^\pi)^n(\mathbf{u}^\Omega - \mathbf{u}^\pi).$
	By horizon continuity, the left side goes to $\mathbf{0}$ as $n\to\infty$ for all $\pi$ and $\Omega$. Since $\{\textbf{u}^\Omega\}$ spans $\mathbb{R}^n$, so too does $\{\mathbf{u}^\Omega - \mathbf{u}^\pi\}$. It follows that $(\bm{\Gamma}^\pi\mathbf{T}^\pi)^n \to \mathbf{0}$, for otherwise, there exists $\textbf{x} \neq \mathbf{0}$ for which $(\bm{\Gamma}^\pi\mathbf{T}^\pi)^n \textbf{x} \not\to \mathbf{0}$, but $\textbf{x}$ can be expressed as a linear combination of the spanning set, which leads to a contradiction. That $(\mathbf{I} - \bm{\Gamma}^\pi\mathbf{T}^\pi)$ is invertible follows from the well known matrix identity (\inlinecite{kemeny1960finite} \S 1.11.1).
\end{proof}

\subsection{Preference structure of MDPs}\label{subsection_pref_struct_of_MDPs}

The value function of an MDP induces preferences over $\mathscr{L}(\mathcal{P})$ when treated as a utility function on $\mathcal{P}$; i.e., according to the rule $\tilde{p} \succ \tilde{q}$ if $\sum_{(s, \Pi)} \tilde{p}((s, \Pi))V^\Pi(s) > \sum_{(s, \Pi)} \tilde{q}((s, \Pi))V^\Pi(s)$. We have that:

\begin{theorem}\label{theorem_consistency}
Preferences induced by the value function of an MDP in continuous settings, with fixed $\gamma < 1$, and in episodic settings, with $\gamma = 1$, satisfy Axioms 1-7.
\end{theorem}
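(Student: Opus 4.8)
The plan is to treat the value function as the cardinal utility and verify each axiom directly against the definition $V^\Pi(s) = \mathbb{E}[\sum_{t=0}^\infty \gamma^t R(s_t,a_t)]$. First I would set $U(s,\Pi) := V^\Pi(s)$ and check that this is a genuine real-valued function on $\mathcal{P}$: since $R$ is bounded, say by $R_{\max}$, the series converges absolutely with $|V^\Pi(s)| \le R_{\max}/(1-\gamma)$ in the continuous case, and reduces to a finite sum of rewards up to termination in the episodic case. Because the induced relation is \emph{by definition} $\tilde p \succ \tilde q \iff \sum_z \tilde p(z)U(z) > \sum_z \tilde q(z)U(z)$ over finite-support lotteries, this is precisely an expected-utility representation, so the reverse implication of Theorem \ref{theorem_VNM} yields Axioms 1--4 at once. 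For Axiom \ref{axiom_irrelevance_unrealizable} I would observe that $V^\Pi(s)$ is a functional of the distribution over trajectories induced by $(s,\Pi)$ alone; if $\Pi$ and $\Omega$ generate identical stochastic processes from $s$, these distributions coincide, so $V^\Pi(s) = V^\Omega(s)$ and hence $(s,\Pi) \sim (s,\Omega)$.

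For Axiom \ref{axiom_consistency} (dynamic consistency) I would invoke the Bellman equation $U(s,a\Pi) = V^{a\Pi}(s) = R(s,a) + \gamma\,\mathbb{E}_{s'\sim T(s,a)}[V^\Pi(s')]$, which follows immediately from the definition of $V$. Then $(s,a\Pi) \succ (s,a\Omega)$ holds iff $R(s,a) + \gamma\mathbb{E}_{s'}[V^\Pi(s')] > R(s,a) + \gamma\mathbb{E}_{s'}[V^\Omega(s')]$; the common reward cancels and, since $\gamma > 0$, the inequality is preserved after dividing, leaving $\mathbb{E}_{s'\sim T(s,a)}[V^\Pi(s')] > \mathbb{E}_{s'\sim T(s,a)}[V^\Omega(s')]$, which is exactly the defining condition for $(T(s,a),\Pi) \succ (T(s,a),\Omega)$. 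The two statements are therefore equivalent, establishing the axiom.

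Axiom \ref{axiom_countable_transitivity} (horizon continuity) is where the two regimes diverge and is, I expect, the main obstacle. Comparing $V^{\Pi_n\Omega}(s)$ with $V^\Pi(s)$, the first $n$ reward terms have an identical distribution (both follow $\Pi$ for $n$ steps), so the two values differ only in the tail from time $n$ onward. In the continuous case this tail is controlled geometrically, $|V^\Pi(s) - V^{\Pi_n\Omega}(s)| \le 2\gamma^n R_{\max}/(1-\gamma) \to 0$, giving the required $U(s,\Pi_n\Omega) \to U(s,\Pi)$. The delicate case is episodic $\gamma = 1$, where geometric decay is unavailable; here I would argue that termination (the absorbing zero-reward state) makes the expected total reward absolutely summable, so that the expected tail reward beyond step $n$ vanishes as $n \to \infty$ by dominated convergence. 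This is the step requiring the most care, since it rests on the termination/properness built into the episodic formulation rather than on boundedness of $R$ alone, and I would state that assumption explicitly. A minor accompanying caveat is the need for $\gamma > 0$ in the dynamic-consistency step, which I would flag, as the degenerate $\gamma = 0$ case collapses all prospects sharing a first action and breaks that biconditional.
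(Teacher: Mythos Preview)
Your proposal is correct and follows essentially the same route as the paper: invoke the necessity direction of Theorem~\ref{theorem_VNM} for Axioms~1--4, verify Axioms~\ref{axiom_irrelevance_unrealizable} and~\ref{axiom_consistency} directly from the definition and the one-step Bellman decomposition, and handle Axiom~\ref{axiom_countable_transitivity} via geometric tail decay in the discounted case and termination in the episodic case. The paper's own proof is simply a terser version of yours, dismissing Axioms~5, 6, and the episodic half of~7 as ``obvious'' and giving only the $\gamma^n$-tail argument for the continuous half of~7.

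Your added caveats are in fact improvements over the paper. The observation that $\gamma=0$ breaks the biconditional in Axiom~\ref{axiom_consistency} is valid: with $\gamma=0$ one always has $(s,a\Pi)\sim(s,a\Omega)$, yet $(T(s,a),\Pi)\succ(T(s,a),\Omega)$ can hold, so the theorem as literally stated (with $\gamma\in[0,1)$ inherited from the MDP definition) has a degenerate corner case the paper does not acknowledge. Likewise, your remark that the episodic $\gamma=1$ case of Axiom~\ref{axiom_countable_transitivity} rests on an explicit properness/termination assumption (so that tail rewards vanish in expectation) is exactly the content the paper hides behind the word ``obvious''.
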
 

\begin{proof}
Axioms 1-4 follow from the necessity part of Theorem \ref{theorem_VNM}. Axioms 5 and 6 are obvious, as is Axiom 7 in the episodic case. Axiom 7 is true in the continuous case because bounded $R$ and $\gamma < 1$ imply that the total contribution of rewards received after $n$ steps goes to $0$ as $n \to \infty$.
\end{proof}

\begin{corollary}Theorem 2 applies to MDPs (this is well known; see, e.g., Theorem 6.2.9(b) of \inlinecite{puterman2014markov}).
\end{corollary}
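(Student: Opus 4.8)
The plan is to recognize the corollary as a direct instantiation of Theorem \ref{theorem_optstat} in the special case of MDP-induced preferences, so that essentially no new mathematical work is required beyond connecting two results already established. The key observation is that the proof of Theorem \ref{theorem_optstat} was carried out purely within the axiomatic framework: it invokes Lemma \ref{lemma_optdelay}, which relies on dynamic consistency (Axiom \ref{axiom_consistency}), together with horizon continuity (Axiom \ref{axiom_countable_transitivity}), and nothing in that argument uses structure beyond what Axioms 1--7 guarantee.

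First I would invoke Theorem \ref{theorem_consistency}, which establishes that the preference relation induced on $\mathscr{L}(\mathcal{P})$ by the value function of an MDP---in the continuous setting with $\gamma < 1$ or the episodic setting with $\gamma = 1$---satisfies Axioms 1 through 7. Consequently every hypothesis used in the derivation of Theorem \ref{theorem_optstat} is available for MDP-induced preferences, and the abstract result transfers verbatim.

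Second, I would check that the abstract notion of optimality employed in Theorem \ref{theorem_optstat}---a policy $\Pi$ with $(s, \Pi) \succeq (s, \Omega)$ for all $s$ and $\Omega$---coincides with the usual MDP notion. Since the induced preference satisfies $(s,\Pi) \succeq (s,\Omega) \iff V^\Pi(s) \geq V^\Omega(s)$, an axiomatically optimal policy is exactly one whose value function dominates at every state, which is the standard definition. Given the existence of such a policy in finite discounted MDPs (the classical fact cited from Puterman), Theorem \ref{theorem_optstat} then yields an optimal stationary policy, completing the argument.

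I expect the only real subtlety to be this alignment of definitions, i.e., confirming that ``optimal policy'' means the same thing in both frameworks and that the conditional form of the statement (``if there exists an optimal policy'') matches the MDP setting. The core logical content is immediate once Theorem \ref{theorem_consistency} is in hand, so the main obstacle is one of bookkeeping rather than mathematical difficulty.
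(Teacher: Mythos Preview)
Your proposal is correct and matches the paper's intent: the corollary is stated immediately after Theorem \ref{theorem_consistency} with no separate proof, precisely because once MDP-induced preferences are shown to satisfy Axioms 1--7, Theorem \ref{theorem_optstat} applies verbatim. Your additional check that the axiomatic notion of optimality coincides with the standard value-dominance definition is a sensible piece of bookkeeping that the paper leaves implicit.
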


If the converse of Theorem \ref{theorem_consistency} were true (i.e., Axioms 1-7 implied fixed $\gamma$), this would validate the use of the MDP as the default model for RL. Unfortunately, Theorem \ref{theorem_functional_form} is the closest we can get to such a result; it is not hard to see that the utilities in the Cliff Example of Section \ref{section_motivation}---which are inconsistent with any fixed $\gamma$---are consistent with the axioms (this is illustrated in the Supplement). 

\subsection{$\bm{\Gamma(s,a) > 1}$}\label{subsection_gamma}

A key feature of our Theorem \ref{theorem_functional_form} representation is that, but for Axiom \ref{axiom_countable_transitivity}, which demands long run discounting, $\Gamma$ is unconstrained. As noted in Section \ref{section_related}, this differentiates our result from the otherwise identical recursions of \inlinecite{meyer1976preferences} and \inlinecite{epstein1983stationary}, which bound the discount factor by 1. It also suggests a more general form of value function than the proposals of \inlinecite{white2017UnifyingTS}, \inlinecite{silver2017ThePE} and \inlinecite{sutton2011horde}, which adopt a variable $\gamma$, but constrain it to $[0,1]$. 

A natural question is whether our axioms can (and \textit{should}) be extended to provide a normative justification for a constrained discount factor---in particular, should we apply the same argument as Meyer and Epstein? Let us first understand their result. Meyer and Epstein both consider trajectories as primitive objects of preference, so that each trajectory has a well-defined utility. Further, they assume that the utilities of trajectories are bounded, that the domain of preference includes arbitrary trajectories, and that at least one trajectory $\tau$ has non-zero utility (which is true so long as the agent is not indifferent between all trajectories). Now suppose there exists $(s, a)$ for which $\Gamma(s, a) > 1$. It must be that $\mathscr{R}(s, a) = 0$, else $U((s, a, s, a, \dots))$ does not exist. But then the sequence of utilities $\{U((s, a, \tau)), U((s, a, s, a, \tau)), \dots\}$ formed by repeatedly prepending $(s, a)$ to $\tau$ diverges. This contradicts the assumptions and it follows that $\Gamma(s, a) \leq 1$ for all $(s, a)$. 

We cannot apply their argument directly. Trajectories are not primitive objects in our framework, and can only be valued by successively unrolling the Theorem \ref{theorem_functional_form} Bellman relation and summing the ``discounted'' rewards along the trajectory. However, there is no guarantee that this sum converges (see Supplement for an example). But even if we were to insist that, in addition to Axioms 1-7, all plausible trajectories have bounded utilities, the Meyer-Epstein argument would still not apply and $\Gamma(s, a) > 1$ would still be possible. The reason is that \textit{arbitrary} trajectories like $(s, a, s, a, \dots)$ are not necessarily \textit{plausible}---if these trajectories are not in the support of any prospect, it does not make sense to demand that the agent express preference with respect to them, even hypothetically speaking. For instance, it does not make sense for a chess agent to evaluate the impossible sequence of moves (\texttt{e4}, \texttt{e4}, \texttt{e4}, \dots). For this reason, we argue that the Meyer-Epstein approach to sequential rationality is too restrictive, and that we should allow $\Gamma(s, a) > 1$, so long as there is long run discounting in accordance with Axiom 7.

\subsection{The optimizing MDP}\label{subsection_optimizing}

Although the fixed $\gamma$ MDP is not expressive enough to model all preference structures considered rational by our framework, it may still serve as a model thereof. In this subsection we prove the existence of an MDP whose optimal value function equals the optimal utility function. As a preliminary step, let us restate without proof a classic result for MDPs (e.g., Theorem 6.2.6 of \inlinecite{puterman2014markov}):

\begin{lemma}[Bellman optimality]\label{lemma_bellman_opt}
Given an MDP, policy $\pi$ is optimal with respect to $V$ if and only if, $\forall s \in S$,
\begin{equation*}V^\pi(s) = \arg\max_{a \in A}(R(s, a) + \gamma\mathbb{E}[V^\pi(s')]).\end{equation*}
\end{lemma}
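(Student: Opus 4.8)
The plan is to cast both directions of the equivalence as statements about a single contraction mapping and its unique fixed point. First I would introduce the Bellman optimality operator $\mathcal{T}: \mathbb{R}^{|S|} \to \mathbb{R}^{|S|}$ defined by $(\mathcal{T}V)(s) = \max_{a \in A}\left(R(s,a) + \gamma \mathbb{E}_{s' \sim T(s,a)}[V(s')]\right)$, together with the fixed-policy operator $(\mathcal{T}^\pi V)(s) = \mathbb{E}_{a \sim \pi(s)}[R(s,a) + \gamma \mathbb{E}_{s' \sim T(s,a)}[V(s')]]$. The maximum is attained because $A$ is finite, and $V^\pi$ is by definition the fixed point of $\mathcal{T}^\pi$. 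The target condition in the lemma is precisely the fixed-point equation $V^\pi = \mathcal{T} V^\pi$, so the whole statement reduces to understanding when $V^\pi$ is a fixed point of $\mathcal{T}$.

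Next I would establish that both $\mathcal{T}$ and each $\mathcal{T}^\pi$ are contractions in the sup-norm with modulus $\gamma < 1$. For $\mathcal{T}$ this follows from the elementary inequality $|\max_a f(a) - \max_a g(a)| \leq \max_a |f(a) - g(a)|$ applied statewise, together with the $\gamma$-scaling of the expectation; boundedness of $R$ guarantees the operators map bounded functions to bounded functions. By the Banach fixed point theorem each operator then has a unique fixed point: $V^\pi$ for $\mathcal{T}^\pi$, and some $\hat{V}$ for $\mathcal{T}$.

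The crux is identifying $\hat{V}$ with the optimal value $V^* := \sup_\pi V^\pi$, which I expect to be the main obstacle. I would prove two inequalities. For $\hat{V} \geq V^\pi$ for every $\pi$: since $\mathcal{T}$ takes a pointwise maximum over actions, $\hat{V} = \mathcal{T}\hat{V} \geq \mathcal{T}^\pi \hat{V}$; iterating the monotone contraction $\mathcal{T}^\pi$ yields $\hat{V} \geq \lim_n (\mathcal{T}^\pi)^n \hat{V} = V^\pi$. For the reverse, I would take $\pi^*$ acting greedily with respect to $\hat{V}$ (well defined since $A$ is finite), so that $\mathcal{T}^{\pi^*}\hat{V} = \mathcal{T}\hat{V} = \hat{V}$; thus $\hat{V}$ is the fixed point of $\mathcal{T}^{\pi^*}$, giving $\hat{V} = V^{\pi^*} \leq V^*$. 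Combining the two inequalities yields $\hat{V} = V^*$.

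Finally I would assemble the equivalence. By definition $\pi$ is optimal iff $V^\pi = V^* = \hat{V}$. Since $\hat{V}$ is the \emph{unique} fixed point of $\mathcal{T}$, the condition $V^\pi = \hat{V}$ holds iff $V^\pi = \mathcal{T}V^\pi$, which is exactly the displayed Bellman optimality equation. This chain of equivalences, hinging on uniqueness of the fixed point, closes both directions of the lemma simultaneously.
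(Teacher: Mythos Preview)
The paper does not actually prove this lemma: it is introduced with ``let us restate without proof a classic result for MDPs (e.g., Theorem 6.2.6 of Puterman 2014)'' and no argument follows. So there is no in-paper proof to compare against; any correct proof you supply goes beyond what the paper does.

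Your contraction-mapping argument is the standard route (and is essentially the one in Puterman): introduce the optimality operator $\mathcal{T}$ and the policy operators $\mathcal{T}^\pi$, show both are $\gamma$-contractions in sup-norm, identify the unique fixed point of $\mathcal{T}$ with $V^*$ via the two inequalities you describe, and then read off the equivalence from uniqueness of the fixed point. The logic is sound. Two small remarks. First, you quietly assume $A$ (and effectively $S$) is finite so that the max is attained and the greedy policy exists; the paper's preliminaries do not state this, so if you want a fully self-contained proof you should either add that hypothesis or replace the attained-max step by an $\epsilon$-greedy approximation argument. Second, you have correctly read the displayed equation as $V^\pi(s) = \max_{a}(\cdots)$; the paper's use of $\arg\max$ there is a typographical slip, since the right-hand side must be a value, not an action.
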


We also define $U^* = U^{\pi^*}$, $V^* = V^{\pi^*}$ and $Q^* = Q^{\pi^*}$, and recall that $U$ is overloaded for domains $S$ and $S \times A$.

\begin{theorem}[Existence of optimizing MDP]\label{theorem_existence}
Given an SDP with utility $U$ over prospects and optimal stationary policy $\pi^*$, for all $\gamma \in [0, 1)$, there exists a \textit{unique} ``optimizing MDP'' that extends the SDP with discount factor $\gamma$ and reward function $R$ such that $\pi^*$ is optimal with respect to $V$, and has corresponding optimal $V^* = U^*$ and $Q^* = U^*$.
\end{theorem}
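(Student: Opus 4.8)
The plan is to read the reward function off the conditions we must satisfy, and then verify those conditions hold. In any MDP with discount $\gamma$, the optimal policy's functions obey $Q^{\pi^*}(s,a) = R(s,a) + \gamma\,\mathbb{E}_{s'\sim T(s,a)}[V^{\pi^*}(s')]$; demanding $V^* = U^*$ and $Q^* = U^*$ therefore forces
$$R(s,a) = U^*(s,a) - \gamma\,\mathbb{E}_{s'\sim T(s,a)}[U^*(s')],$$
which by Theorem \ref{theorem_functional_form} equals $\mathscr{R}(s,a) + (\Gamma(s,a)-\gamma)\,\mathbb{E}_{s'\sim T(s,a)}[U^*(s')]$. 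I would take this identity as the definition of $R$. Uniqueness is then immediate: any reward function meeting the theorem's requirements must satisfy the very same identity, so at most one optimizing MDP exists. (Here $R$ is a bounded reward whenever $U^*$ is bounded, e.g.\ for finite $S$, so the MDP is well defined.)

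Next I would verify $V^{\pi^*} = U^*$, which gives $V^* = U^*$ since $V^* := V^{\pi^*}$. Substituting $U^*$ into the MDP Bellman equation for $\pi^*$ and using the definition of $R$ reduces the check to the mixture identity
$$U(s,\pi^*) = \sum_a \pi^*(s)(a)\,U(s,a\pi^*),$$
i.e.\ the utility of following $\pi^*$ is the $\pi^*(s)$-weighted average of the utilities of its first-action branches. This holds because the prospect $(s,\pi^*)$ induces exactly the same stochastic process as the corresponding mixture of the prospects $(s,a\pi^*)$, so Axiom \ref{axiom_irrelevance_unrealizable} lets us identify them and the linearity of $U$ from Theorem \ref{theorem_VNM} applies. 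Since $\gamma < 1$ and $R$ is bounded, the Bellman equation has a unique bounded solution, so $V^{\pi^*} = U^*$; then $Q^*(s,a) = R(s,a) + \gamma\,\mathbb{E}_{s'}[U^*(s')] = U^*(s,a)$ follows directly from the construction.

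It remains to show $\pi^*$ is optimal in the constructed MDP, which I would do via Lemma \ref{lemma_bellman_opt}. Using $R(s,a) + \gamma\,\mathbb{E}_{s'}[U^*(s')] = U^*(s,a) = U(s,a\pi^*)$, the Bellman optimality equation reduces to $U^*(s) = \max_a U(s,a\pi^*)$. The inequality ``$\geq$'' holds because $\pi^*$ is an optimal policy of the SDP, so $(s,\pi^*) \succeq (s,a\pi^*)$ for every $a$; the inequality ``$\leq$'' follows from the mixture identity above, since an average of the $U(s,a\pi^*)$ cannot exceed their maximum. Hence $U^* = V^{\pi^*}$ satisfies Bellman optimality and, by Lemma \ref{lemma_bellman_opt}, $\pi^*$ is optimal, completing the existence claim.

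The step I expect to be the crux is the mixture identity $U(s,\pi^*) = \sum_a \pi^*(s)(a)\,U(s,a\pi^*)$, on which both the Bellman-equation verification and the $\max$ computation rest. The delicate point is justifying that the stochastic process generated by $(s,\pi^*)$ is genuinely the $\pi^*(s)$-mixture of the processes generated by the $(s,a\pi^*)$---needed so that Axiom \ref{axiom_irrelevance_unrealizable} applies and VNM linearity can be invoked---precisely because $\pi^*$ may randomize on its first step. The remaining ingredients (contraction/uniqueness for the $\gamma<1$ Bellman equation and Lemma \ref{lemma_bellman_opt}) are standard.
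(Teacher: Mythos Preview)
Your proposal is correct and follows essentially the same route as the paper: define $R(s,a)=U^*(s,a)-\gamma\,\mathbb{E}_{s'}[U^*(s')]$, verify that $U^*$ satisfies the resulting Bellman equations so that $V^*=U^*$ and $Q^*=U^*$, and conclude optimality of $\pi^*$ via Lemma~\ref{lemma_bellman_opt}, with uniqueness forced by the $Q^*$ identity. Your argument is in fact a bit more careful than the paper's---you make the mixture identity $U(s,\pi^*)=\sum_a \pi^*(s)(a)\,U(s,a\pi^*)$ explicit (the paper effectively writes $\pi^*(s)$ as if it were a single action) and you appeal to the contraction fixed point rather than unrolling the recursion---but these are refinements of the same proof, not a different approach.
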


\begin{proof}
Put $R(s, a) = U^*(s, a) - \gamma \mathbb{E}[U^*(s')]$. Then:
\begin{align*}
U^*(s_t) &= R(s_t, \pi^*(s_t)) + \gamma\mathbb{E}[U^*(s_{t+1})]\\
&= \mathbb{E}\left[\sum_{t=0}^\infty \gamma^t R(s_t, \pi^*(s_t))\right] = V^*(s_t)
\shortintertext{and:}
U^*(s, a) &= R(s, a) + \gamma \mathbb{E}(V^*(s_{t+1})) = Q^*(s, a).
\end{align*}
Since $V^*(s) = U^*(s, \pi^*(s_t)) \geq U^*(s, a) = Q^*(s, a)$, $\forall a \in A$, it follows from Lemma \ref{lemma_bellman_opt} that $\pi^*$ is optimal with respect to $V$. For uniqueness, suppose that $V^* = U^*$ and $Q^* = U^*$ are optimal; then by definition of $Q^*$, we have $U^*(s,a) = R(s, a) + \gamma \mathbb{E}[U^*(s')]$, so that $R(s, a) = U^*(s, a) - \gamma \mathbb{E}[U^*(s')]$ as above.
\end{proof}

A consequence of Theorem \ref{theorem_existence} is that the inverse reinforcement learning problem \cite{ng2000algorithms} is solvable (in theory); i.e., there exists a reward function that can \textit{explain} any behavior satisfying Axioms 1-7 as being the solution to an MDP. This same consequence follows from Theorem 3 of \cite{ng2000algorithms}, which characterizes the \textit{set} of reward functions under which some observed behavior is optimal. Our theorem differs from that of Ng \& Russell in that it produces a \textit{unique} solution for completely specified preferences, whereas the theorem of Ng \& Russell produces a \textit{set} of solutions for partially specified preferences.

\subsection{Relating value to utility}\label{subsection_utility_value_bounds}

Although $V^* = U^*$ in the optimizing MDP, the Cliff Example tells us that, in general, $V^\pi \neq U^\pi$. This is a potentially serious problem because an agent may \textit{never} find the optimal policy. Indeed, humans are the only general purpose agents we know of, and our bounded rationality is well documented \cite{simon1972theories,tversky1986rational}. It is possible that general purpose objective preferences are so complex that \textit{all} intelligent agents---present or future; human, artificial or alien---are so bounded to suboptimality. 

Nevertheless, a natural hypothesis is that the closer $V^\pi$ is to $V^*$---i.e., the better an agent performs in its approximate model of $U$---the better off it will tend to be. There is a sense in which this is true, at least for finite $\vert S \vert$. Recalling the vector notation defined for Theorem \ref{theorem_matrix_limit}, defining $\mathbf{v}^\pi$ accordingly, noting that $\mathbf{u}^* = \mathbf{v}^*$, and setting $\Epsilon^\pi = \bm{\Gamma}^\pi - \gamma\mathbf{I}$,  we have:

\begin{theorem}\label{theorem_utility_value}
	In the optimizing MDP (for finite $\vert S \vert$) :
	\small
	\begin{equation*}
	\begin{split}
	\mathbf{u}^\pi &= \mathbf{u}^* - (\mathbf{I} - \bm{\Gamma}^\pi\mathbf{T}^\pi)^{-1}
	(\mathbf{I} - \gamma\mathbf{T}^\pi)(\mathbf{v}^* - \mathbf{v}^\pi)\\
	&= \mathbf{v}^\pi - (\mathbf{I} - \bm{\Gamma}^\pi\mathbf{T}^\pi)^{-1}\Epsilon^\pi\mathbf{T}^\pi(\mathbf{v}^* - \mathbf{v}^\pi).\\
	\end{split}
	\end{equation*}
	\normalsize
\end{theorem}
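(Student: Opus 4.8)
The plan is to solve both Bellman relations in closed form, bridge the two reward functions, and finish with one cancellation identity. First I would specialize the vector Bellman relation of Theorem \ref{theorem_functional_form} to the stationary policy $\pi$, giving $\mathbf{u}^\pi = \mathbf{r}^\pi + \bm{\Gamma}^\pi\mathbf{T}^\pi\mathbf{u}^\pi$, and write the standard MDP value relation as $\mathbf{v}^\pi = \mathbf{R}^\pi + \gamma\mathbf{T}^\pi\mathbf{v}^\pi$, where $\mathbf{R}^\pi$ is the vector collecting the optimizing-MDP rewards $R(s_i,\pi(s_i))$. Theorem \ref{theorem_matrix_limit} guarantees $(\mathbf{I} - \bm{\Gamma}^\pi\mathbf{T}^\pi)$ is invertible, and $(\mathbf{I} - \gamma\mathbf{T}^\pi)$ is invertible by the usual $\gamma<1$ argument, so these solve to $\mathbf{u}^\pi = (\mathbf{I} - \bm{\Gamma}^\pi\mathbf{T}^\pi)^{-1}\mathbf{r}^\pi$ and $\mathbf{R}^\pi = (\mathbf{I} - \gamma\mathbf{T}^\pi)\mathbf{v}^\pi$.

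The crux is relating $\mathbf{R}^\pi$ to $\mathbf{r}^\pi$. Using the optimizing-MDP reward $R(s,a) = U^*(s,a) - \gamma\mathbb{E}[U^*(s')]$ and applying Theorem \ref{theorem_functional_form} with the optimal continuation $\pi^*$ to rewrite $U^*(s,a) = U(s,a\pi^*) = \mathscr{R}(s,a) + \Gamma(s,a)\mathbb{E}[U^*(s')]$, the two $\mathbb{E}[U^*(s')]$ terms combine to give, componentwise, $R(s_i,\pi(s_i)) = \mathscr{R}(s_i,\pi(s_i)) + (\Gamma(s_i,\pi(s_i)) - \gamma)\mathbb{E}[U^*(s')]$. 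In matrix form, invoking $\mathbf{u}^* = \mathbf{v}^*$ from Theorem \ref{theorem_existence} and the definition $\Epsilon^\pi = \bm{\Gamma}^\pi - \gamma\mathbf{I}$, this is the bridge identity $\mathbf{R}^\pi = \mathbf{r}^\pi + \Epsilon^\pi\mathbf{T}^\pi\mathbf{v}^*$, equivalently $\mathbf{r}^\pi = (\mathbf{I} - \gamma\mathbf{T}^\pi)\mathbf{v}^\pi - \Epsilon^\pi\mathbf{T}^\pi\mathbf{v}^*$.

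From here the result is algebra. Substituting into $\mathbf{u}^\pi = (\mathbf{I} - \bm{\Gamma}^\pi\mathbf{T}^\pi)^{-1}\mathbf{r}^\pi$ yields the intermediate expression $\mathbf{u}^\pi = (\mathbf{I} - \bm{\Gamma}^\pi\mathbf{T}^\pi)^{-1}\big[(\mathbf{I} - \gamma\mathbf{T}^\pi)\mathbf{v}^\pi - \Epsilon^\pi\mathbf{T}^\pi\mathbf{v}^*\big]$. Everything then hinges on the single cancellation
$$(\mathbf{I} - \gamma\mathbf{T}^\pi) - \Epsilon^\pi\mathbf{T}^\pi = \mathbf{I} - \bm{\Gamma}^\pi\mathbf{T}^\pi,$$
which follows at once from expanding $\Epsilon^\pi = \bm{\Gamma}^\pi - \gamma\mathbf{I}$. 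Splitting $\mathbf{v}^* = \mathbf{v}^\pi + (\mathbf{v}^* - \mathbf{v}^\pi)$ and grouping so that the $\mathbf{v}^\pi$ coefficient becomes $(\mathbf{I} - \bm{\Gamma}^\pi\mathbf{T}^\pi)^{-1}(\mathbf{I} - \bm{\Gamma}^\pi\mathbf{T}^\pi) = \mathbf{I}$ produces the second displayed form. Splitting instead as $\mathbf{v}^\pi = \mathbf{v}^* - (\mathbf{v}^* - \mathbf{v}^\pi)$ and applying the same cancellation to the $\mathbf{v}^*$ term (again using $\mathbf{u}^* = \mathbf{v}^*$) produces the first form; the two are thus equivalent.

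The main obstacle I anticipate is the bridge step, i.e.\ correctly expressing the optimizing-MDP reward $\mathbf{R}^\pi$ in terms of the SDP reward $\mathbf{r}^\pi$. This demands applying Theorem \ref{theorem_functional_form} with the optimal continuation $\pi^*$ rather than $\pi$, keeping straight that $U^*(s,a)$ abbreviates $U(s,a\pi^*)$, and leaning on the optimizing-MDP identity $\mathbf{u}^* = \mathbf{v}^*$. Once the bridge is secured, the remaining manipulations are routine and reduce entirely to the cancellation $(\mathbf{I} - \gamma\mathbf{T}^\pi) - \Epsilon^\pi\mathbf{T}^\pi = \mathbf{I} - \bm{\Gamma}^\pi\mathbf{T}^\pi$ applied twice.
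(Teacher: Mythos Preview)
Your proposal is correct and follows essentially the same approach as the paper: both proofs hinge on the bridge relation between the SDP reward $\mathbf{r}^\pi$ and the optimizing-MDP reward (your $\mathbf{R}^\pi$, the paper's $\bm{r}^\pi$), obtained by expanding $U^*(s,a)$ via Theorem~\ref{theorem_functional_form}, and then combine it with the two Bellman equations for $\mathbf{u}^\pi$ and $\mathbf{v}^\pi$. The paper organizes this as four Bellman equations (introducing an auxiliary vector $\mathbf{u}^{a*}$ for $U^*(s,\pi(s))$) and derives the second displayed form from the first via the Neumann identity $(\mathbf{I}-\bm{\Gamma}\mathbf{T})^{-1}=\mathbf{I}+(\mathbf{I}-\bm{\Gamma}\mathbf{T})^{-1}\bm{\Gamma}\mathbf{T}$; your route---solving for $\mathbf{u}^\pi$ in closed form and then splitting $\mathbf{v}^*$ or $\mathbf{v}^\pi$ against the cancellation $(\mathbf{I}-\gamma\mathbf{T}^\pi)-\Epsilon^\pi\mathbf{T}^\pi=\mathbf{I}-\bm{\Gamma}^\pi\mathbf{T}^\pi$---is a slightly cleaner packaging that avoids both the auxiliary vector and the series identity, but the underlying argument is the same.
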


\begin{proof}
	The Bellman relation provides four equations:
	\small
	\begin{multicols}{2}
		\noindent
		\begin{align}
		\mathbf{u}^\pi &= \mathbf{r}^\pi + \bm{\Gamma}^\pi\mathbf{T}^\pi\mathbf{u}^\pi\label{eq_1}\\
		\mathbf{u}^{a*} &= \mathbf{r}^\pi + \bm{\Gamma}^\pi\mathbf{T}^\pi\mathbf{u}^{*}\label{eq_2}
		\end{align}
		\begin{align}
		\mathbf{v}^\pi &= \bm{r}^\pi + \gamma\mathbf{T}^\pi\mathbf{v}^\pi\label{eq_3}\\
		\mathbf{u}^{a*} &= \bm{r}^\pi + \gamma\mathbf{T}^\pi\mathbf{v}^*\label{eq_4}
		\end{align}
	\end{multicols}
	\normalsize
	\noindent where $\bm{r}^\pi$ is the vector representing $R(s, \pi(s))$. The first equality of the theorem follows by computing (2) - (4), isolating $\mathbf{r}^\pi - \bm{r}^\pi$, substituting it into (1) - (3), adding $\mathbf{v}^* - \mathbf{v}^*$ ($= \mathbf{0}$) to one side, and rearranging, given that $(\mathbf{I} - \bm{\Gamma}^\pi\mathbf{T}^\pi)$ is invertible by Theorem \ref{theorem_matrix_limit}. 
	
	The second equality follows by expanding $(\mathbf{I} - \bm{\Gamma}\mathbf{T})^{-1} = \mathbf{I} + \bm{\Gamma}\mathbf{T} + (\bm{\Gamma}\mathbf{T})^2 + \dots = \mathbf{I} + (\mathbf{I} - \bm{\Gamma}\mathbf{T})^{-1}\bm{\Gamma}\mathbf{T}$, and writing:
	\small
	\begin{equation*} \scalebox{0.89}[1]{$
	\begin{aligned}
	\mathbf{u}^\pi &= \mathbf{u}^* - (\mathbf{I} + (\mathbf{I} - \bm{\Gamma}\mathbf{T})^{-1}\bm{\Gamma}\mathbf{T})
	(\mathbf{I} - \gamma\mathbf{T}^\pi)(\mathbf{v}^* - \mathbf{v}^\pi)\\
	&= \mathbf{v}^\pi - [(\mathbf{I} - \bm{\Gamma}\mathbf{T})^{-1}\bm{\Gamma} - \gamma(\mathbf{I} + (\mathbf{I} - \bm{\Gamma}\mathbf{T})^{-1}\bm{\Gamma}\mathbf{T})]\mathbf{T}(\mathbf{v}^* - \mathbf{v}^\pi).\\
	&= \mathbf{v}^\pi - [(\mathbf{I} - \bm{\Gamma}\mathbf{T})^{-1}\bm{\Gamma} - (\mathbf{I} - \bm{\Gamma}\mathbf{T})^{-1}\gamma\mathbf{I}]\mathbf{T}(\mathbf{v}^* - \mathbf{v}^\pi).\\
	&= \mathbf{v}^\pi - (\mathbf{I} - \bm{\Gamma}^\pi\mathbf{T}^\pi)^{-1}\Epsilon^\pi\mathbf{T}^\pi(\mathbf{v}^* - \mathbf{v}^\pi)
	\end{aligned}
	$}\end{equation*}
	\normalsize
	where we again use the identity $\mathbf{I} + (\mathbf{I} - \bm{\Gamma}\mathbf{T})^{-1}\bm{\Gamma}\mathbf{T} = (\mathbf{I} - \bm{\Gamma}\mathbf{T})^{-1}$ in the third line.	
\end{proof}

It is worth examining the factors of the product $(\mathbf{I} - \bm{\Gamma}^\pi\mathbf{T}^\pi)^{-1}\Epsilon^\pi\mathbf{T}^\pi(\mathbf{v}^* - \mathbf{v}^\pi)$. The final factor $(\mathbf{v}^* - \mathbf{v}^\pi)$ has non-negative entries and tells us that the difference between $\mathbf{u}^\pi$ and $\mathbf{v}^\pi$ is a linear function of the agent's regret in the optimizing MDP, which supports our hypothesis that better approximated performance is correlated with better objective performance. The factors $(\mathbf{I} - \bm{\Gamma}^\pi\mathbf{T}^\pi)^{-1}$ and $\mathbf{T}^\pi$ also have all non-negative entries (to see this, write the former as an infinite sum). Thus, the sign of the error depends on the entries of $\Epsilon^\pi = \bm{\Gamma}^\pi - \gamma\mathbf{I}$. Unfortunately, there is no way to guarantee that $\Epsilon^\pi$ has all negative entries, which would guarantee that value estimates in optimizing MDP are pessimistic, since $\Gamma(s, a)$ may be greater than 1 at some $(s, a)$.

It is emphasized that Theorem \ref{theorem_utility_value} does not address the preference reversal problem observed in the Cliff Example. Even though $\mathbf{u}^* - \mathbf{u}^\pi$ is related to $\mathbf{v}^* - \mathbf{v}^\pi$ by linear transformation, the relationship is not monotonic (entry-wise, or in norm). Preferences over suboptimal prospects implied by the optimizing MDP's value function may very well be reversed.

\section{Discussion} \label{section_GRL}

\subsection{General reinforcement learning (GRL)} \label{subsection_GRL}

Our analysis suggests that, from a normative perspective, the MDP may not be sufficiently expressive. Two approaches for representing non-MDP preference structures come to mind. First, one can adopt the Theorem \ref{theorem_functional_form} representation and use an ``MDP+$\Gamma$'' model of the environment that defines both an external reward signal $R$ and an external anticipation function $\Gamma$. This is the approach of White's RL task formalism (2017), which, motivated by practical considerations, proposes to specify tasks using a transition-based discount factor. This is also the approach taken by \inlinecite{silver2017ThePE}. The theoretical analysis of \inlinecite{white2017UnifyingTS} covers convergence of standard algorithms for the case of $\Gamma \leq 1$. Although we defer a rigorous analysis of convergence in our more general case, we note that Theorem \ref{theorem_matrix_limit}, which guarantees that $\bm{\Gamma}^\pi\mathbf{T}^\pi$ (corresponding to White's $\textbf{P}_{\pi, \gamma}$) has maximum eigenvalue less than 1, makes our setting easier to analyse (we get White's Lemma 3 for free).

Second, one might avoid the use of a single, monolithic MDP to model global preferences, and instead think about ways of coordinating many specialized MDPs. One way to do so is hierarchically, as in hierarchical RL. The idea is that it is, or at least should be, easier to express accurate preference at the level of goals (i.e., without incurring preference reversal between suboptimal goals) than at the level of fine-grained prospects. So given a set of goals $G$, an agent can decompose its preferences into two stages: first pick $g\in G$, and then optimize a $g$-specialized MDP, $\bm{M}_g$, to pick fine-grained actions. \inlinecite{kulkarni2016hierarchical} provides an example of how this might be done. Note that all $\bm{M}_g$s share world dynamics, so that only $R$ (and optionally $\gamma$) change with $g$. 

As these approaches allow an agent to represent more general preference structures than the standard RL framework, one might term them \textit{general reinforcement learning} (GRL). This is a bit of a misnomer, however, as the RL problem, broadly framed, includes GRL. 

\subsection{Inverse and preference-based RL}

Our work leads to a generalization of the inverse reinforcement learning (IRL) problem. Rather than asking, ``given the observed behavior, what reward signal, if any, is being optimized?'' \cite{russell1998learning}, our work suggests asking: \textit{given the observed behavior, what utility function (parameterized by $R$ and $\Gamma$), if any, is being optimized?} Future work should explore whether this produces empirical improvements.

Preference-based RL (PBRL) \cite{wirth2017survey} side steps reward function design by learning directly from human preferences. Although a wide variety of algorithms fall under this general umbrella, PBRL ``still lacks a coherent framework'' \cite{wirth2017survey}. In particular, the \textit{object} of preference has varied widely between different algorithms. For instance, studies in PBRL have seen preferences expressed over actions given a state \cite{griffith2013policy}, over entire policies \cite{akrour2011preference}, over complete trajectories \cite{wilson2012bayesian}, and over partial trajectories \cite{christiano2017deep}. Yet, per the discussion in Subsection \ref{subsection_prefs_over_prospects}, none of these objects satisfy Axiom 1 (asymmetry) without additional assumptions, which are not always explicitly stated or analyzed. Our work is relevant to PBRL in that it proposes a basic object---the prospect $(s, \Pi)$---over which (objective) preferences are asymmetric (given MP). Our axiomatic framework falls well short of the coherent foundation sought by Wirth, however, as there is no obvious way in which preferences over abstract prospects can be empirically expressed. 

\subsection{Generalized successor representation}

An interesting connection between the MDP and the MDP-$\Gamma$ is that, in both, a policy-specific value function can be decomposed into the product of ``discounted expected future state occupancies'' and rewards. In the finite case, the former factor is represented by the matrix $\mathbf{S} = (\mathbf{I} - \bm{\Gamma}\mathbf{T})^{-1}$ (see Theorem \ref{theorem_matrix_limit}), so that $\mathbf{v} = \mathbf{S}\mathbf{r}$. When $\Gamma = \gamma$, $\mathbf{S}$ is the well known successor representation (SR) \cite{dayan1993improving}. 

What makes the SR interesting here is that it seems to solve some of the problems posed by the abstract anticipation function $\Gamma$. First, $\Gamma$ is sensitive to the discretization of time (for the same reason annual interest rates are larger than monthly ones). Second, small changes in the average $\Gamma$ can result in large changes in value (increasing $\gamma$ from 0.990 to 0.991 increases the value of a constant positive perpetuity by over 10\%). By contrast, the entries of $\mathbf{S}$---despite its opaque formula---provide an interpretable and time-scale invariant measure of causality \cite{pitis2018source}. Changes in $\mathbf{S}$ impact $\mathbf{v}$ in proportion to $\mathbf{r}$, and there is even evidence to suggest that the humans utilize the SR to cache multi-step predictions \cite{momennejad2017successor}. For these reasons, it may be easier and more effective to elicit and use values of $\mathbf{S}$, representing long run accumulations of the $\Gamma$ function, than individual values of $\Gamma$, whether for GRL, IRL or PBRL.

\section{Conclusion} \label{section_conclusion}

We have provided normative justification for a departure from the traditional MDP setting that allows for a state-action dependent discount factor that can be greater than 1. Future work should empirically test whether our proposal has practical merit and confirm that classic convergence results extend to our more general setting. 

\small
\bibliography{rationality_final_with_supplement}
\bibliographystyle{aaai}
\normalsize

\vfill

\begin{figure*}[t]
\includegraphics[width=\textwidth]{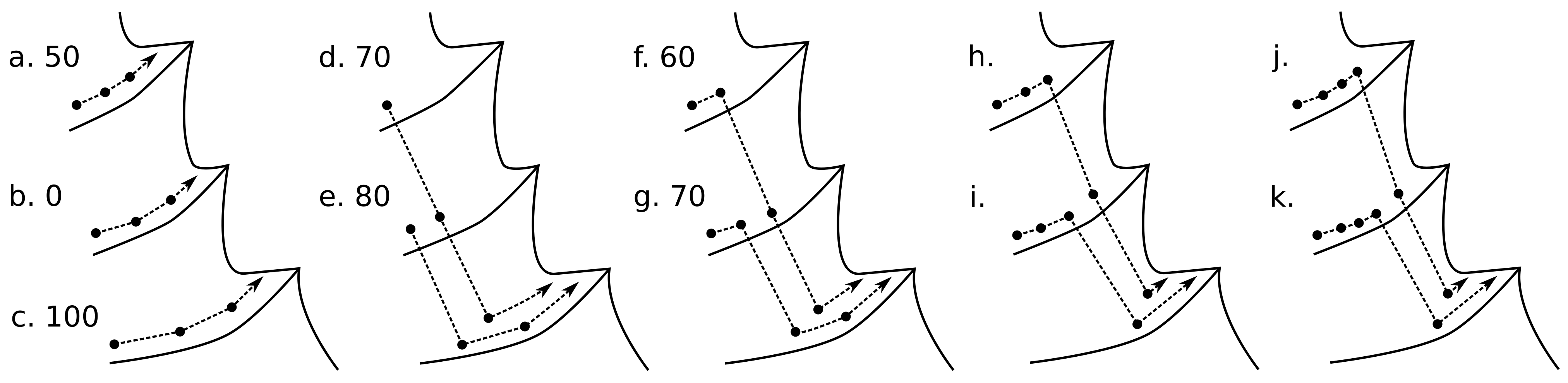}
\caption{Extension of Figure \ref{fig_cliffMDP} to illustrate more paths.}
\label{fig_cliffMDP1}
\end{figure*}

\section*{Supplement}

In this supplement we use the Cliff Example to illustrate concretely three claims made in the paper. The path labels in Figure \ref{fig_cliffMDP1} and the state and state-action labels in Figure \ref{fig_cliffMDP2} are referenced throughout. 

\boldmath\subsection*{Preference reversal occurs for all $\gamma \in [0, 1)$}\unboldmath

This claim is made in Section \ref{section_motivation}. Given the values of paths d, e, f and g in Figure \ref{fig_cliffMDP1}, we have:
\small
$$ V(g) = R(\text{MM}) + \gamma V(e), \text{\ \ so that\ \ } R(\text{MM}) = 70 - \gamma 80,$$
\normalsize
and
\small
$$ V(f) = R(\text{HH}) + \gamma V(d), \text{\ \ so that\ \ } R(\text{HH}) = 60 - \gamma 70.$$
\normalsize
It follows that $R(\text{MM}) > R(\text{HH})$ for all $\gamma \in [0, 1)$ since $R(\text{MM}) - R(\text{HH}) = 10 -\gamma10$ is positive. 

\subsection*{The Cliff Example satisfies Axioms 1-7}

This claim is made in Subsection \ref{subsection_pref_struct_of_MDPs}. The following $\mathscr{R}$ and $\Gamma$ provide one consistent Theorem \ref{theorem_functional_form} representation (not unique) and imply the following utilities for paths h-k:

\small
\begin{multicols}{2}
	\begin{tabular}{@{}ccc@{}}
	  $(s, a)$ & $\mathscr{R}(s, a)$ & $\Gamma(s, a)$ \\
	  \hline
	  LL & 10 & 0.9 \\
	  ML & -10 & 0.9 \\
	  MM & 0 & 0.875 \\
	  HM & -2 & 0.9 \\
	  HH & 25 & 0.5 \\
	\end{tabular}
	\begin{center}
	\begin{tabular}{@{}cc@{}}
	  path & $U(\text{path})$ \\
	  \hline
	  h & 55 \\
	  i & 61.25  \\
	  j & 52.5  \\
	  k & 53.59375  \\
	\end{tabular}
	\end{center}
\end{multicols}
\normalsize

If other paths (not shown) are given utilities consistent with Theorem $\ref{theorem_functional_form}$, and we assume the utility of all lotteries (none shown) are computed as expected values, Axioms 1-4 and dynamic consistency hold (by the necessity part of Theorem \ref{theorem_VNM}, and by the fact that positive affine transformations represent the same preferences, respectively). Axiom \ref{axiom_irrelevance_unrealizable} is obviously satisfied. Finally, notice that each step that path d is delayed (paths f, h, j...) brings the utility closer to path a. Similarly, delaying path e (paths g, i, k...) brings the utility closer to that of path b. This is true in general because $\Gamma < 1$, so that the contribution of future rewards to utility goes to zero and horizon continuity is satisfied.

\boldmath\subsection*{$\Gamma(s, a) > 1$ is consistent with Axioms 1-7}\unboldmath

This claim is made in Section \ref{section_related} and Subsection \ref{subsection_gamma} as one of the factors differentiating our framework from those of \inlinecite{meyer1976preferences} and \inlinecite{epstein1983stationary}, in addition to that of \inlinecite{white2017UnifyingTS}. To see that this is possible, suppose that action HH were stochastic rather than deterministic: although the agent attempts to stay on the high path, half of the time the agent trips and slides down to the middle path. Using the $\mathscr{R}$ and $\Gamma$ shown in the table to the left, but setting $\Gamma(\text{HH}) = 1.2$, results in an MDP-$\Gamma$ that has bounded utility over prospects and satisfies Axioms 1-7. Even though the ``discounted'' reward along path a is unbounded, the agent \textit{almost surely} cannot stay on path a, and the expectation of any policy that tries to stay on path a exists. In particular, an agent that tries to stay on path a, and resorts to the policy that follows path e if it trips (call this prospect $X$), has utility
$U(X) = 25 + 1.2(0.5\times U(X) + 0.5\times 80)$, so that $U(X) = 182.5$, which is finite. Since the utilities of all prospects exist, the fact that this revised MDP-$\Gamma$ is consistent with axioms 1-7 follows by an argument similar to that used for the original Cliff Example. 

\begin{figure}[t]
\includegraphics[width=\columnwidth]{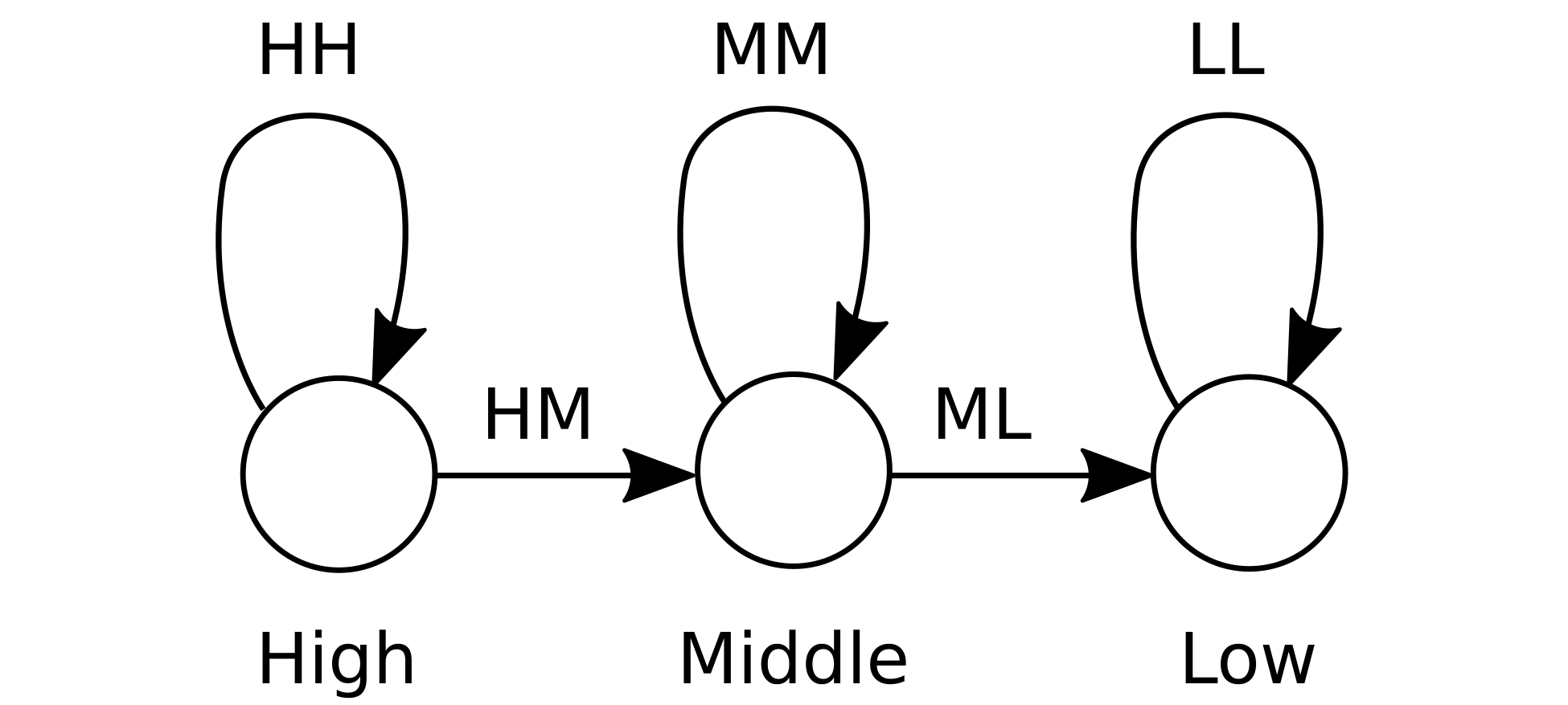}
\caption{SDP representation of Cliff Example.}
\label{fig_cliffMDP2}
\end{figure}

\end{document}